\documentclass[lettersize,journal]{IEEEtran}
\usepackage{amsmath,amsfonts}
\usepackage{amssymb}
\usepackage{algorithmic}
\usepackage{algorithm}
\usepackage{array}
\usepackage[caption=false,font=normalsize,labelfont=sf,textfont=sf]{subfig}
\usepackage{textcomp}
\usepackage{stfloats}
\usepackage{url}
\usepackage{verbatim}
\usepackage{graphicx}
\usepackage{cite}
\usepackage{bbm, dsfont}
\usepackage[numbers]{natbib}
\usepackage{amsthm}

\usepackage{xcolor}
\definecolor{green(pigment)}{rgb}{0.1607, 0.3843, 0.0941}
\definecolor{blue(pigment)}{rgb}{0., 0.1484, 0.6992}
\usepackage{hyperref}
\hypersetup{hidelinks}
\usepackage{cleveref}

\newtheoremstyle{claim}
  {\topsep}
  {\topsep}
  {\itshape}
  {}
  {\itshape}
  {.}
  {.5em}
  {\thmname{#1}\thmnumber{ #2}\thmnote{ (#3)}}
\theoremstyle{claim}

\newtheorem{thm}{Theorem}[section]
\newtheorem{mydef}[thm]{Definition}
\newtheorem{mycor}[thm]{Corollary}

\newtheorem{mylem}[thm]{Lemma}
\newtheorem{myass}[thm]{Assumption}
\newtheorem{mypers}[thm]{Proposition}
\newtheorem{myrem}{Remark}

\newtheorem{innercustomthm}{Proof}
\newenvironment{custompro}[1]
  {\renewcommand\theinnercustomthm{#1}\innercustomthm}
  {\endinnercustomthm}

\usepackage{tcolorbox}
\usepackage{multirow}%
\usepackage{tabularx}
\usepackage{ctable}
\usepackage{colortbl}

\newcommand{\gr}{\rowcolor[gray]{1}}
\newcommand{\grr}{\cellcolor[gray]{1}}
  
\renewcommand{\vec}[1]{\mathbf{#1}}

\newcommand{\x}{\vec{x}}

\newcommand{\w}{\vec{w}}
\newcommand{\W}{\vec{W}}

\newcommand{\delt}{{\boldsymbol{\delta}}}

\newcommand{\D}{\mathcal{D}}

\newcommand{\U}{\vec{U}}

\newcommand{\HH}{\vec{H}}

\newcommand{\A}{\vec{A}}
\newcommand{\s}{\vec{s}}
\newcommand{\St}{\mathcal{S}}

\newcommand{\h}{\vec{h}}
\newcommand{\hh}{h^2}
\newcommand{\ac}{\vec{a}}

\newcommand{\KL}{\mathrm{KL}}
\newcommand{\g}{\vec{g}}

\newcommand{\act}{\mathrm{\vec{act}}}

\newcommand{\T}{\top}

\newcommand{\Pro}{\mathbb{P}}
\newcommand{\E}{\mathbb{E}}
\newcommand{\N}{\mathbb{N}}
\newcommand{\R}{\vec{R}}
\newcommand{\Lc}{\mathcal{L}}

\newcommand{\V}{\mathcal{V}}

\newcommand{\vecc}{\mathrm{vec}}

\newcommand{\fgm}{\mathrm{fgm}}

\newcommand{\adv}{\rm{adv}}

\newcommand{\ul}{\vec{u}}

\newcommand{\X}{\mathcal{X}_{B,d}}
\newcommand{\Y}{\mathcal{Y}}

\newcommand{\commentout}[1]{}

\newcommand{\gaojie}[1]{{\color{orange}{GJ:}{}#1}}
\newcommand{\blue}[1]{{\color{black}#1}}

\begin{document}

\title{S$^2$O: Enhancing Adversarial Training with Second-Order Statistics of Weights}

\author{Gaojie~Jin,
        Xinping~Yi,
        Wei~Huang,
        Sven~Schewe,
        \\and~Xiaowei~Huang
\thanks{The work was supported in part by the EPSRC through grants EP/X03688X/1 and EP/V026887/1, and by the National Natural Science Foundation of China under Grant 62471129.  \includegraphics[height=8pt]{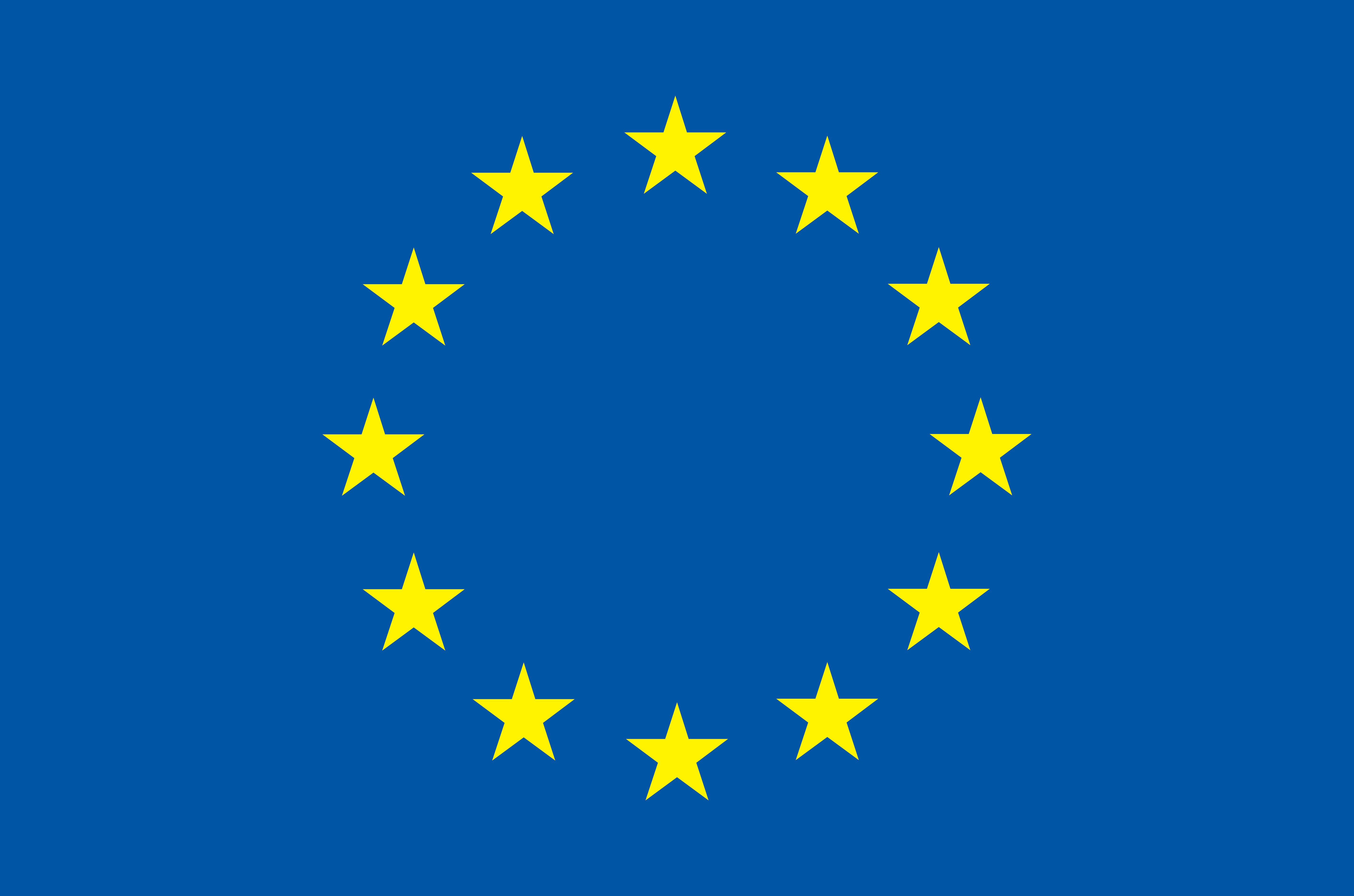} It has also received funding from the 
European Union’s Horizon Europe research and innovation action under grant 
agreement No 101212818. (Corresponding author: Xinping Yi; Xiaowei Huang)}
\thanks{Gaojie Jin is with the Department of Computer Science, University of Exeter, Exeter, UK. Email: gaojie.jin.kim@gmail.com}
\thanks{Xinping Yi is with the National Mobile Communications Research Laboratory, Southeast University, Nanjing, China. Email: xyi@seu.edu.cn}
\thanks{Wei Huang, Sven Schewe and Xiaowei Huang are with the Department of Computer Science, University of Liverpool, Liverpool, UK. Email: \{svens,xiaowei.huang\}@liverpool.ac.uk}
\thanks{
This article is an extended version of the paper~\cite{DBLP:journals/corr/abs-2203-06020}, which was presented in the IEEE/CVF conference on computer vision and pattern recognition, 2022.}
}



\maketitle

\begin{abstract}
Adversarial training has emerged as a highly effective way to improve the robustness of deep neural networks (DNNs).
It is typically conceptualized as a min-max optimization problem over model weights and adversarial perturbations, where the weights are optimized using gradient descent methods, such as SGD. 
In this paper, we propose a novel approach by treating model weights as random variables, which paves the way for enhancing adversarial training through \textbf{S}econd-Order \textbf{S}tatistics \textbf{O}ptimization (S$^2$O) over model weights.
We challenge and relax a prevalent, yet often unrealistic, assumption in prior PAC-Bayesian frameworks: the statistical independence of weights. 
From this relaxation, we derive an improved PAC-Bayesian robust generalization bound. 
Our theoretical developments suggest that optimizing the second-order statistics of weights can substantially tighten this bound. 
We complement this theoretical insight by conducting an extensive set of experiments that demonstrate that S$^2$O not only enhances the robustness and generalization of neural networks when used in isolation, but also seamlessly augments other state-of-the-art adversarial training techniques. 
The code is available at \url{https://github.com/Alexkael/S2O}.
\end{abstract}

\begin{IEEEkeywords}
DNNs, Robustness, Adversarial training, PAC-Bayes, Robust generalization bound.
\end{IEEEkeywords}

\section{Introduction}

\IEEEPARstart{D}{NN}s are known to be vulnerable to incorrect predictions with high confidence when subjected to human-imperceptible perturbations in their input, as demonstrated in several studies~\cite{goodfellow2014explaining,szegedy2013intriguing}.
This vulnerability has prompted the development of various methods to detect or mitigate such adversarial examples. 
Among these methods, adversarial training has emerged as one of the most effective strategies~\cite{papernot2016distillation,madry2017towards}.



\blue{The vanilla adversarial training is fundamentally structured as a min-max optimization problem, where the inner maximization identifies the most detrimental perturbations of training instances, while the outer minimization seeks to minimize the loss induced by these adversarial perturbations. 
In this work, we propose a novel paradigm for adversarial training by incorporating second-order statistical information of weights. 
This departure from conventional approaches is supported by both theoretical analysis and empirical evidence, demonstrating that second-order weight statistics play a crucial role in enhancing model robustness. 
Our comprehensive investigation provides rigorous theoretical guarantees and extensive experimental validation of this approach.}

Our theoretical contribution extends the PAC-Bayesian framework \cite{mcallester1999pac,dziugaite2017computing} beyond its traditional focus on model generalization and independent weights. 
Our extension, detailed in \Cref{sec:bound}, makes two fundamental contributions. 
First, we relax the conventional assumption of weight independence by explicitly modeling second-order statistics through weight correlation and normalized covariance matrices. 
Second, building upon \citet{xiao2023pac}, we expand the framework to address adversarial robustness. 
\blue{
This article significantly extends the conference paper \cite{DBLP:journals/corr/abs-2203-06020} by deriving a tighter robust generalization bound through direct model perturbation analysis. 
The enhanced bound preserves the tightness properties of the standard generalization bound while explicitly incorporating the perturbation radius $\epsilon$, yielding a more precise characterization of robust generalization.
}

This updated framework provides a theoretical indication that we can optimize the robust generalization bound during training --- and thus improve both robustness and generalization of the resulting model --- by monitoring the norms (e.g., spectral norm, determinant) of the weight correlation matrix. 
Implementing this optimization requires methods to estimate the weight correlation matrix and to guide the training process. 
As shown in \Cref{sec:4}, we employ the Laplace approximation method for estimating the correlation matrix, which is inspired by \citet{botev2017practical,ritter2018scalable}.
We then introduce a novel and efficient method named \emph{Second-Order Statistics Optimization} (S$^2$O) to enhance adversarial training.

Our extensive experimental evaluations in \Cref{sec:experiments} demonstrate that S$^2$O substantially enhances both model robustness and generalization performance. 
Beyond its effectiveness as a standalone method, S$^2$O also synergistically amplifies the performance of existing adversarial training approaches. 
\blue{Expanding upon the earlier conference version \cite{DBLP:journals/corr/abs-2203-06020} we extend, we present a comprehensive empirical study across diverse model architectures (including ViT-B and DeiT-S), multiple datasets (such as Tiny-ImageNet and Imagenette), and various robustness metrics (e.g., encompassing multiple $\ell_p$ adversarial robustness).}

\subsection{Related work}

\blue{Adversarial training is one of the most effective ways to improve robustness by exposing a model to adversarial examples.
A popular adversarial training version aims to reduce vanilla adversarial training to an equivalent or approximate expression, primarily focusing on optimizing the distance between representations of clean and adversarial examples. 
For instance, \citet{engstrom2018evaluating,kannan2018adversarial}, emphasize enforcing similarity between the logits activations of both unperturbed and adversarially perturbed versions of the same input. 
\citet{zhang2019theoretically} analyze the trade-off between robust and clean error, demonstrating bounds on their gap. 
Adversarial training can also be enhanced by pre-processing adversarial examples prior to training, for example, \citet{chen2020robust,muller2019does} modify the training process by replacing the `hard' label in adversarial instances with a `soft' label. 
This concept is further explored in \citet{zhang2020does,lee2020adversarial}, which defines a virtual sample along the adversarial direction and enriches the training distribution with soft labels. 
In addition, \citet{DBLP:journals/corr/abs-2202-10103,wang2023better} leverage extra data generated by the denoising diffusion probabilistic model (DDPM) \citep{ho2020denoising} to improve adversarial training.
\citet{wu2020adversarial} adapt the inner maximization to take one additional maximization to find a weight perturbation based on the generated adversarial examples. 
The outer minimization is then based on the perturbed weights~\citep{devries2017improved} to minimize the loss induced by the adversarial examples.
S$^2$O is different from the adversarial training methods discussed above, as it optimizes the model through weight correlation matrices.  
While the weight correlation matrix is viewed as a statistical property of weight matrices that are treated as random variables, orthogonality considers weight matrices as deterministic variables.

Recently, there are two popular test-time defense strategies against adversarial attacks: adversarial detection and adversarial purification. 
Adversarial detection enhances robustness by identifying and filtering out adversarial inputs. 
Notable methods include the Bayesian approach of \citet{deng2021libre} using likelihood ratios and the expected perturbation score approach from \citet{zhang2023detecting}. 
Adversarial purification, on the other hand, transforms adversarial examples back into clean inputs before model processing. 
Recent advances include the dynamic transformation learning \citep{li2023learning}, the score-based generative purification \citep{yoon2021adversarial}, and the diffusion model denoising approach \citep{nie2022diffusion}.
S$^2$O differs fundamentally from these inference-time defenses by enhancing model robustness during training through improved weight optimization. 
Importantly, S$^2$O can complement these test-time defense strategies, working synergistically with both adversarial detection and purification methods.

PAC-Bayes provides a powerful framework for analyzing generalization in machine learning models \citep{mcallester1999pac}, with particular strength in studying stochastic classifiers and majority vote classifiers. 
\citet{neyshabur2017pac} extend this framework to analyze generalization in deterministic DNNs, while \citet{farnia2018generalizable,xiao2023pac} further develop it to address DNN robustness. 
Our work advances this framework by relaxing the assumption of weight independence to accommodate correlated weights, yielding a novel robust generalization bound. 
This theoretical development reveals the crucial role of second-order statistics in DNN robustness, providing a rigorous foundation for our proposed S$^2$O method.}

\subsection{Motivation and contributions}
\blue{
For clarity, in this subsection, we summarize the key results from \Cref{sec:bound} and \Cref{sec:4}. 
Our analysis begins with relaxing the conventional assumption of weight independence; instead, we explicitly model correlated weights within the PAC-Bayesian framework. 
Building on this relaxed assumption, our primary theoretical contribution, formalized in \Cref{thm:advbound2} of \Cref{sec:bound}, establishes a robust generalization bound that incorporates second-order statistics of weights. 
The main results are summarized below.

\vspace{1.5mm}
\noindent
\emph{Sketch for \Cref{sec:bound}. With probability at least $1-\delta$, the inequality 
$$\text{Expected Robust Error} \leq \text{Empirical Robust Error} + \Omega_{adv}$$ 
holds within the $\ell_2$ norm data perturbation radius $\epsilon$, where $\Omega_{adv}$ is a model-dependent term characterized in part by the second-order statistics of weights.}
\vspace{1.5mm}

This theoretical result reveals that robust performance can be improved through regularizing second-order statistics of weights during adversarial training. 
Motivated by this insight, we develop a practical implementation in \Cref{sec:4} as follows.

\vspace{1.5mm}
\noindent
\emph{Sketch for \Cref{sec:4}. 
The second-order statistics of weights can be optimized using the Frobenius norm of the weight correlation matrix, i.e., $\|\R\|_F^2$.
Using Laplace approximation, we introduce a regularizer (S$^2$O) that controls the weight correlation matrix ($\|\R\|_F^2$) via the normalized covariance of the post-activation ($\|\mathbf{A}\|_F^2$).}

\begin{myrem}
This work presents the first incorporation of second-order statistics into robust generalization bounds and develops the corresponding S$^2$O regularizer based on theoretical insights. 
Detailed derivations of the robust generalization bound with second-order weight statistics are provided in \Cref{sec:bound}, while \Cref{sec:4} presents the complete development of S$^2$O. 
\end{myrem}

In \Cref{sec:experiments}, we provide an extensive set of empirical results that demonstrate the effectiveness of S$^2$O  across different datasets and network architectures.
}

\section{Preliminaries}




\subsection{Basic notation}

\blue{
Let $\St=\{(\x_1,y_1),...,(\x_m,y_m) \}$ be a training set with $m$ samples drawn i.i.d. from an underlying, fixed but unknown, distribution $\mathcal{D}$ on $\X \times \Y$. 
We suppose $\x \in \X$ and $y \in \Y$ where $\X=\{\x\in\mathbb{R}^d|\sum_{i=1}^d x_i^2\le B^2 \}$ and $\Y=\{1,2,...,n_y\}$ are the set of different labels.
We define $f_{\w}: \X \to \Y$ as the learning function parameterized by weights $\w$, where $\w$ are real-valued weights for functions mapping $\X$ to $\Y$. 
We call the set $\mathcal{H}$ of classifiers (or the set of classifier weights) the hypotheses.
Let $\ell:\mathcal{H}\times\X \times \Y \to \mathbb{R}^+$ be the loss function used in the training.
}


We consider $f_\w$ as an $n$-layer neural networks with $h$ hidden units per layer and activation functions $\act(\cdot)$.
Let $\W$, $\W_l$ be the model weight matrix and the weight matrix of $l$-th layer, respectively.
Let $\w,\w_l$ be the vectorizations of $\W,\W_l$ (i.e., $\w=\vecc(\W)$), respectively.
We can now express each $f_{\w}(\x)$ as $f_{\w}(\x) = \W_{n}\act(\W_{n-1} ...\act(\W_1 \x)...)$. 
We omit bias for convenience as bias can be incorporated into the weight matrix.
At the $l$-th layer ($l=1,\dots,n$), the latent representation before and after the activation function is denoted as $\h_l = \W_{l}\act(\W_{l-1} ...\act(\W_1 \x)...)$ and $\ac_l = \act(\W_{l} ...\act(\W_1 \x)...)$, respectively.

We use $\|\W_l\|_2$ to denote the spectral norm of $\W_l$, deﬁned as the largest singular value of $\W_l$, and $\|\W_l\|_F$, to denote the Frobenius norm of $\W_l$. 
We denote the Kronecker product by $\otimes$ and the Hadamard product by $\odot$.

\subsection{Robust margin loss}
\label{sec:loss}

The performance of a classifier $f_\w$ on the data distribution $\D$ may differ from its performance on the empirical training set $\St$.
The generalization error represents the difference between the empirical and true expected losses, estimated by training and test samples respectively. 
Following \citet{neyshabur2017pac,farnia2018generalizable}, we leverage empirical margin loss to analyze model performance within PAC-Bayes.
For any margin $\gamma>0$, we deﬁne the expected margin loss as
\begin{small}
\begin{equation}
    \Lc_{\gamma}(f_\w):=\mathop{\Pro}\limits_{(\x,y)\sim 
    \D}\Big[ f_\w(\x)[y]\le \gamma+\max\limits_{j\ne y} f_\w(\x)[j] \Big],
\end{equation}
\end{small}and let $\widehat\Lc_{\gamma}(f_\w)$ be the empirical margin loss over $\St$, i.e., 
\begin{small}
\begin{equation}
    \widehat\Lc_{\gamma}(f_\w):=\frac{1}{m}\sum_{i=1}^m \mathbbm{1}\Big[ f_\w(\x_i)[y_i]\le \gamma+\max\limits_{j\ne y_i} f_\w(\x_i)[j] \Big],
\end{equation}
\end{small}where $f_\w(\x)[j]$ denotes the $j$-th entry of $f_\w(\x)$, $\mathbbm{1}[a\le b]=1$ if $a\le b$, else $\mathbbm{1}[a\le b]=0$. 
Note that, setting $\gamma=0$ corresponds to the normal  classification loss, which will be written as $\Lc_0(f_\w)$ or $\widehat\Lc_0(f_\w)$.

\begin{myrem}
\blue{The auxiliary parameter $\gamma$ facilitates theoretical development, where $\gamma=0$ recovers the standard loss. 
Importantly, $\gamma$ only appears in the loss computed over the training set $\St$, ensuring that the classifier operates independently of the label over unseen data distribution $\D$.
That is, we leverage $\widehat\Lc_{\gamma}(f_\w)$ to bound $\Lc_{0}(f_\w)$ in this work.}
\end{myrem}

Adversarial examples are typically generated through a specific attack algorithm. 
Denote the output of such an algorithm as $\delta^{adv}_{\w}(\x)$, and $\delta^{*}_{\w}(\x)$ as the solution that maximizes the following optimization problem
\begin{small}
\begin{equation}
    \max_{\|\delta_{\w}(\x)\|_p\le \epsilon} \ell(f_\w(\x+\delta_{\w}(\x)),y).
\end{equation}
\end{small}In this context, $\ell(\cdot)$ represents the loss function, which calculates the discrepancy between the predicted and the true labels. 
We then define the robust margin loss, succinctly described in the following:
\begin{small}
\begin{equation}
\begin{aligned}
\Lc^{adv}_{\gamma}(f_\w):=\mathop{\Pro}\limits_{(\x,y)\sim 
\D}&\Big[ f_\w(\x+\delta^{*}_{\w}(\x))[y]\le \\
&\quad\quad\quad\quad\;\; \gamma+\max\limits_{j\ne y} f_\w(\x+\delta^{*}_{\w}(\x))[j] \Big].
\end{aligned}
\end{equation}
\end{small}

Adversarial training guides a model to be robust against adversarial attacks by training it with adversarially generated data, which is widely considered one of the most effective defense strategies. 
It solves the following minimax optimization problem:
\begin{small}
\begin{equation}
    \min\limits_{\w} \mathop{\E}\limits_{(\x,y)\sim \D} \Big[\ell(f_\w(\x'),y)\Big],
\end{equation}
\end{small}where $\x' = \x+\delta^*_\w(\x)$.

\subsection{PAC-Bayesian bound}

\blue{
PAC-Bayes~\cite{mcallester1999pac,thiemann2017strongly} typically provides rigorous generalization bounds for two related classifiers: the stochastic Gibbs classifier defined over a posterior distribution $Q$ on hypothesis space $\mathcal{H}$, and its deterministic counterpart, the $Q$-weighted majority vote classifier. 
The latter makes predictions by selecting the most probable output of the Gibbs classifier for each input. 
These bounds are principally characterized by the Kullback-Leibler (KL) divergence between the posterior distribution $Q$ and the prior distribution $P$ over model weights. 
Following \citet{neyshabur2017pac}, we formulate Gibbs classifiers as $f_{\w+\ul}$, where $\w$ denotes deterministic weights and $\ul$ represents a training-dependent random variable. 
\citet{neyshabur2017pac} further extend this framework by deriving generalization bounds that bridge the gap between Gibbs and deterministic classifiers.}

\begin{mylem}[\citet{neyshabur2017pac}]
\label{lem:2.1}
Let $f_{\mathbf{w}}: \X$ $\rightarrow \Y$ denote a base classifier with weights $\mathbf{w}$, and let $P$ be any prior distribution of weights that is independent of the training data.
Then, for any $\delta,\gamma>0$, and any random perturbation $\ul$ $s.t.$ $\Pro_\ul(\max_{\x}|f_{\w+\ul}(\x)-f_\w(\x)|_{\infty}<\frac{\gamma}{4})\ge\frac{1}{2}$, the following bound holds for any $f_\w$ over the training dataset of size $m$ with probability at least $1-\delta$,
\begin{small}
\begin{equation}
\begin{aligned}
    \Lc_{0}(f_\w) &\le  \widehat\Lc_{\gamma}(f_\w)+4 \sqrt{\frac{\KL(Q_{\w+\ul}\|P)+\ln \frac{6m}{\delta}}{m-1}}.
\end{aligned}
\end{equation}
\end{small}
\end{mylem}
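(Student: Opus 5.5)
We follow the standard derandomization argument of \citet{neyshabur2017pac}: start from a PAC-Bayes bound for the Gibbs classifier, then transfer it to the deterministic $f_\w$ by restricting the perturbation to a ``good'' event. As the base result we take the McAllester-type bound: for any prior $P$ independent of $\St$, with probability at least $1-\delta$, simultaneously for all posteriors $Q$,
\begin{small}
\begin{equation*}
\E_{\vec{v}\sim Q}\big[\Lc_{\gamma/2}(f_{\vec{v}})\big]\le \E_{\vec{v}\sim Q}\big[\widehat\Lc_{\gamma/2}(f_{\vec{v}})\big]+2\sqrt{\frac{2\big(\KL(Q\|P)+\ln\frac{2m}{\delta}\big)}{m-1}}.
\end{equation*}
\end{small}
This is applied to the margin-$\gamma/2$ losses, which are $[0,1]$-valued.

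\textbf{Constructing the restricted posterior.} Let
\begin{small}
\begin{equation*}
\St_\ul=\Big\{\ul:\max_\x|f_{\w+\ul}(\x)-f_\w(\x)|_\infty<\tfrac{\gamma}{4}\Big\},
\end{equation*}
\end{small}
which has probability at least $1/2$ by assumption. Define $\tilde Q$ as the distribution of $\w+\ul$ conditioned on $\ul\in\St_\ul$.

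\textbf{Margin transfer.} For every $\w+\ul$ in the support of $\tilde Q$, every output coordinate moves by less than $\gamma/4$, so any margin moves by less than $\gamma/2$. Hence, pointwise in $(\x,y)$,
\begin{small}
\begin{equation*}
\Lc_0(f_\w)\le\Lc_{\gamma/2}(f_{\w+\ul}),\qquad \widehat\Lc_{\gamma/2}(f_{\w+\ul})\le\widehat\Lc_\gamma(f_\w).
\end{equation*}
\end{small}
Taking expectations over $\tilde Q$ on both sides and substituting into the PAC-Bayes bound with $Q=\tilde Q$ gives
\begin{small}
\begin{equation*}
\Lc_0(f_\w)\le\widehat\Lc_\gamma(f_\w)+2\sqrt{\frac{2\big(\KL(\tilde Q\|P)+\ln\frac{2m}{\delta}\big)}{m-1}}.
\end{equation*}
\end{small}

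\textbf{Controlling the KL of the conditioned distribution (the main obstacle).} We need $\KL(\tilde Q\|P)$ in terms of $\KL(Q_{\w+\ul}\|P)$, where $Q=Q_{\w+\ul}$. Write $Q=q\tilde Q+(1-q)\tilde Q^c$ with $q=\Pro(\St_\ul)\ge 1/2$, where $\tilde Q^c$ is $Q$ conditioned on the complement. Convexity-type decomposition of the KL divergence gives
\begin{small}
\begin{equation*}
\KL(Q\|P)=q\,\KL(\tilde Q\|P)+(1-q)\,\KL(\tilde Q^c\|P)-H(q),
\end{equation*}
\end{small}
where $H(q)\le\ln 2$ is the binary entropy. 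Since $\KL(\tilde Q^c\|P)\ge 0$, this yields
\begin{small}
\begin{equation*}
\KL(\tilde Q\|P)\le\frac{1}{q}\big(\KL(Q\|P)+H(q)\big)\le 2\,\KL(Q\|P)+2\ln 2.
\end{equation*}
\end{small}

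\textbf{Collecting constants.} Substituting into the bound above gives
\begin{small}
\begin{equation*}
2\sqrt{\frac{2\big(2\KL(Q\|P)+2\ln 2+\ln\frac{2m}{\delta}\big)}{m-1}}\le 4\sqrt{\frac{\KL(Q\|P)+\ln 2+\frac{1}{2}\ln\frac{2m}{\delta}}{m-1}}\le 4\sqrt{\frac{\KL(Q_{\w+\ul}\|P)+\ln\frac{6m}{\delta}}{m-1}},
\end{equation*}
\end{small}
where the last step uses $\ln 2+\frac{1}{2}\ln\frac{2m}{\delta}\le\ln\frac{6m}{\delta}$, valid since $4m/\delta\ge 1$. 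This gives the stated bound.

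The step needing the most care is the KL decomposition together with the measurability and support arguments behind it. The rest is routine bookkeeping of constants.
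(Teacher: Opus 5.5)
Your proposal is correct and follows essentially the same argument as Neyshabur et al. The paper only cites that work for this lemma and does not reproduce the proof. The shared steps are: condition the posterior on the event where outputs move by less than $\gamma/4$, shift margins by $\gamma/2$ in both directions, apply McAllester's bound uniformly over posteriors, and control $\KL(\tilde Q\|P)\le \frac{1}{q}\big(\KL(Q\|P)+H(q)\big)$ via the binary-entropy decomposition. The only difference is cosmetic: you use $H(q)\le\ln 2$ where Neyshabur et al.\ use $H\le 1$, which slightly tightens an intermediate constant before it is absorbed into $\ln\frac{6m}{\delta}$.
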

\blue{
In this formulation, the $\KL$ divergence is computed with fixed $\w$, where the randomness stems solely from $\ul$.
The distribution of $\w + \ul$ is effectively the distribution of $\ul$ translated by $\w$. 
Following \citet{neyshabur2017pac,farnia2018generalizable}, the prior distribution $P$ over weights is selected through a systematic grid-based approach. 
Supposing $\ul\sim \mathcal{N}(0,\sigma^2\mathbf{I})$, \citet{neyshabur2017pac} derive generalization guarantees by jointly considering the sharpness limit and the Lipschitz properties of the model.
}

\begin{thm}[Standard PAC-Bayesian generalization bound, \citet{neyshabur2017pac}]
\label{thm:2.2}
Consider Lemma~\ref{lem:2.1} and $f_{\w}$ as an $n$ hidden-layer neural networks with $h$ units per layer and ReLU activation function.
Then, for any $\delta,\gamma>0$, with probability at least $1-\delta$ over a training set of size $m$, for any $\w$, we have
\begin{small}
\begin{equation}
\Lc_{0}(f_\w) \! \le \!  \widehat\Lc_{\gamma}(f_\w)+\!\mathcal{O}\! \Bigg( \sqrt{ \frac{B^2n^2h\ln(nh)\Phi(f_\w)+\ln\frac{nm}{\delta}}{\gamma^2m}} \Bigg),
\end{equation}
\end{small}where $\Phi\left(f_{\mathbf{w}}\right):=\prod_{l=1}^n\left\|\mathbf{W}_l\right\|_2^2 \sum_{l=1}^n \frac{\left\|\mathbf{W}_l\right\|_F^2}{\left\|\mathbf{W}_l\right\|_2^2}$.
\end{thm}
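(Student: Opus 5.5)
The plan is to instantiate Lemma~\ref{lem:2.1} with an isotropic Gaussian perturbation $\ul\sim\mathcal{N}(0,\sigma^2\mathbf{I})$ and a Gaussian prior $P=\mathcal{N}(0,\sigma^2\mathbf{I})$. Two ingredients are needed: (i) a choice of $\sigma$ small enough that the perturbation condition $\Pro_\ul(\max_{\x}|f_{\w+\ul}(\x)-f_\w(\x)|_\infty<\gamma/4)\ge 1/2$ holds, and (ii) the evaluation $\KL(Q_{\w+\ul}\|P)=\|\w\|_2^2/(2\sigma^2)$ with that choice. Following \citet{neyshabur2017pac}, I would first normalize the network. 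ReLU is positively homogeneous, so rescaling $\W_l\mapsto (\beta/\|\W_l\|_2)\W_l$ with $\beta=(\prod_l\|\W_l\|_2)^{1/n}$ leaves $f_\w$ unchanged, and it leaves both $\Phi(f_\w)$ and the margin loss invariant. We may therefore assume $\|\W_l\|_2=\beta$ for all $l$.

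\textbf{Perturbation bound.} By induction over layers, using that ReLU is $1$-Lipschitz and $\|\x\|_2\le B$, I would show that if $\|\U_l\|_2\le \beta/n$ for all $l$, then
\begin{equation*}
|f_{\w+\ul}(\x)-f_\w(\x)|_2\le e\,B\,\beta^{n-1}\sum_{l=1}^n\|\U_l\|_2 .
\end{equation*}
The key estimate is $\|\Delta_l\|\le(1+1/n)^{l}\dots$, with the factor $(1+1/n)^n\le e$ controlling the accumulated error. For the random perturbation I would use a Gaussian random-matrix tail bound: $\Pro(\|\U_l\|_2>t)\le 2h\,e^{-t^2/(2h\sigma^2)}$. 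A union bound over $n$ layers then gives $\|\U_l\|_2\le\sigma\sqrt{2h\ln(4nh)}$ for all $l$ with probability at least $1/2$. Choosing
\begin{equation*}
\sigma=\frac{\gamma}{42\,nB\beta^{n-1}\sqrt{h\ln(4nh)}}
\end{equation*}
makes the output perturbation smaller than $\gamma/4$ and also enforces $\|\U_l\|_2\le\beta/n$.

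\textbf{KL term and final bound.} Next I would compute
\begin{equation*}
\frac{\|\w\|^2}{2\sigma^2}=\mathcal{O}\!\left(\frac{B^2n^2h\ln(nh)\,\beta^{2n-2}\sum_l\|\W_l\|_F^2}{\gamma^2}\right),
\end{equation*}
and since $\|\W_l\|_2=\beta$ this equals $\mathcal{O}\!\left(B^2n^2h\ln(nh)\Phi(f_\w)/\gamma^2\right)$. Plugging this into Lemma~\ref{lem:2.1} yields the stated form.

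\textbf{Main obstacle.} The difficulty is that $\sigma$ depends on $\beta$, which depends on $\w$, while the prior $P$ must be chosen before seeing the data. I would handle this with a grid over $\tilde\beta$: build a finite cover of the relevant range of $\beta$, apply the bound for each grid point with confidence $\delta/|\text{grid}|$, and take a union bound. This costs the additional $\ln(nm/\delta)$ term.

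Two conditions make the grid finite. First, for $\beta$ outside $[(\gamma/2B)^{1/n},(\gamma\sqrt m/2B)^{1/n}]$ the bound is trivial: either the margin loss is $1$, or the right-hand side exceeds $1$. Second, a grid point with $|\tilde\beta-\beta|\le\beta/n$ gives $\tilde\beta^{\,n-1}$ within a factor $e$ of $\beta^{n-1}$, so the constants change only by a bounded factor. With these two facts, a grid of size $\mathcal{O}(nm^{1/(2n)})$ suffices, which gives the $\ln(nm/\delta)$ term.
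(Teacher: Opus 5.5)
Your proposal is correct and is essentially the argument of \citet{neyshabur2017pac}, which the paper cites for this result and repeats, with correlated perturbations, in its appendix proof of \Cref{thm:advbound}: spectral normalization to $\|\W_l\|_2=\beta$, the layerwise bound $eB\beta^{n-1}\sum_l\|\U_l\|_2$ under $\|\U_l\|_2\le\beta/n$, a Gaussian spectral-norm tail bound with a union bound over layers, $\KL(Q_{\w+\ul}\|P)=\|\w\|_2^2/(2\sigma^2)$, and a union bound over a $\tilde\beta$-grid of size $\mathcal{O}(nm^{1/(2n)})$. The only thing to tidy is the order of steps: $\sigma$ must be written with $\tilde\beta^{\,n-1}$ (the slack $e^2\sqrt{2}/42<1/4$ is exactly why the constant $42$ is chosen), and your claim that this $\sigma$ enforces $\|\U_l\|_2\le\beta/n$ relies on the lower end $\beta^n\ge\gamma/(2B)$ of the range restriction, which you only introduce afterwards.
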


Denote $\widehat\Lc^{adv}_{\gamma}(f_\w)$ as the empirical estimate of the robust margin loss. 
This robust margin loss necessitates that the entire norm ball, centered around the original example $\x$, is classified correctly. 
This requirement forms the crux of norm-based adversarial robustness. 
By replacing $\delta^{*}_{\w}(\x)$ with FGM~\cite{goodfellow2014explaining}, PGM~\cite{kurakin2016adversarial} and WRM~\cite{sinha2017certifying} adversarial solution in the robust margin loss in \Cref{sec:loss}, \citet{farnia2018generalizable} develop the following theory.
\begin{thm}[Robust generalization bound, \citet{farnia2018generalizable}]
\label{thm:2.3}
Given Theorem~\ref{thm:2.2} and smooth activation functions, consider an FGM attack with noise power $\epsilon$ according to Euclidean norm $\|\cdot\|_2$.
For any $f_\w$, assume $\kappa\le \|\nabla_{\x}\ell(f_\w(\x))\|$ holds for a constant $\kappa > 0$, any $y \in \mathcal{Y}$ and any $\x$ \blue{which satisfies $\|\x\|_2\le B+\epsilon$.}
Suppose there is a constant $M \ge 1$ satisfies
\begin{small}
\begin{equation}
\forall l: \frac{1}{M} \leq \frac{\left\|\mathbf{W}_l\right\|_2}{\beta_{\mathbf{w}}} \leq M, \quad \beta_{\mathbf{w}}:=\left(\prod_{l=1}^n\left\|\mathbf{W}_i\right\|_2\right)^{1 / n}
\end{equation}
\end{small}for any $f_\w$.
Here, $\beta_\w$ represents the geometric mean of the spectral norms of $f_\w$ across all layers.
Then, for any $\delta,\gamma>0$, any $\w$ over
a training dataset of size $m$, with probability at least $1-\delta$, we have
\begin{small}
\begin{equation}
\begin{aligned}
&\Lc^{\adv}_{0}(f_\w) \le  \widehat\Lc_{\gamma}^{\adv} (f_\w)\\
&\quad\quad\;\;+\mathcal{O} \Bigg( \sqrt{\frac{(B+\epsilon)^2n^2h\ln(nh)\Phi(f_\w)+n\ln\frac{mn\ln M}{\delta}}{\gamma^2m}} \Bigg),
\end{aligned}
\end{equation}
\end{small}where $\Phi(f_\w)$ depends on the attack method. For an FGM attack, we have
\begin{small}
\begin{equation}
\begin{aligned}
    &\Phi(f_\w):=\prod_{l=1}^n \|\W_l\|_2^2 (1+C_{fgm})^2 \sum_{l=1}^n \frac{\|\W_l\|^2_F}{\|\W_l\|^2_2},\\
    &C_{fgm}=\frac{\epsilon}{\kappa}(\prod_{l=1}^n\|\W_l\|_2) \sum_{l=1}^n\prod_{j=1}^l \|\W_j\|_2.
\end{aligned}
\end{equation}
\end{small}
\end{thm}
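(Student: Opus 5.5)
Theorem~\ref{thm:2.3} is the robust analogue of Theorem~\ref{thm:2.2}, and I would prove it the same way. The idea is to view the adversarially perturbed classifier as a new deterministic map $g_\w(\x):=f_\w(\x+\delta^{fgm}_\w(\x))$ defined on the enlarged ball $\|\x\|_2\le B+\epsilon$, and then apply Lemma~\ref{lem:2.1} to $g_\w$. With $g_\w$ in place of $f_\w$, the margin losses in Lemma~\ref{lem:2.1} become exactly $\Lc^{\adv}_0$ and $\widehat\Lc^{\adv}_\gamma$. The prior and posterior are $P=\mathcal N(0,\sigma^2\mathbf I)$ and $\ul\sim\mathcal N(0,\sigma^2\mathbf I)$, so $\KL(Q_{\w+\ul}\|P)=\|\w\|_2^2/(2\sigma^2)$. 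What remains is to choose $\sigma$ so that $\Pro_\ul\big(\max_\x|g_{\w+\ul}(\x)-g_\w(\x)|_\infty<\gamma/4\big)\ge 1/2$.

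\textbf{Step 1 (perturbation bound for $g_\w$).} I would split the change in $g_\w$ into two parts. The first is the change in the network at a fixed input, $|f_{\w+\ul}(\x')-f_\w(\x')|$. The standard layerwise argument of \citet{neyshabur2017pac} bounds it by $e(B+\epsilon)\prod_l\|\W_l\|_2\sum_l\|\U_l\|_2/\|\W_l\|_2$, provided $\|\U_l\|_2\le\|\W_l\|_2/n$. The second is the change caused by moving the adversarial point, $|f_{\w+\ul}(\x+\delta_{\w+\ul})-f_{\w+\ul}(\x+\delta_\w)|$, which is at most $\mathrm{Lip}(f_{\w+\ul})\,\|\delta_{\w+\ul}(\x)-\delta_\w(\x)\|$.

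\textbf{Step 2 (the FGM step), which I expect to be the main obstacle.} For FGM, $\delta_\w(\x)=\epsilon\nabla_\x\ell/\|\nabla_\x\ell\|$. The map $v\mapsto v/\|v\|$ is $(2/\kappa)$-Lipschitz on $\{\|v\|\ge\kappa\}$, so it suffices to bound how much the input gradient moves under the weight perturbation. Using smooth activations with bounded first and second derivatives, I would show by induction over layers that
\[
\|\nabla_\x\ell(f_{\w+\ul})-\nabla_\x\ell(f_\w)\|\lesssim \Big(\prod_l\|\W_l\|_2\Big)\sum_l\prod_{j\le l}\|\W_j\|_2\cdot\sum_l\frac{\|\U_l\|_2}{\|\W_l\|_2}.
\]
Multiplying by $\epsilon/\kappa$ and by $\mathrm{Lip}(f_{\w+\ul})\lesssim\prod_l\|\W_l\|_2$ gives the second part of Step 1 in the form $C_{fgm}\prod_l\|\W_l\|_2\sum_l\|\U_l\|_2/\|\W_l\|_2$. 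Adding the first part, the total perturbation is $\lesssim(B+\epsilon)(1+C_{fgm})\prod_l\|\W_l\|_2\sum_l\|\U_l\|_2/\|\W_l\|_2$, where I absorb $B+\epsilon$ into the constants after normalizing. The bookkeeping in this gradient-difference induction, which needs second-order smoothness, is the delicate part.

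\textbf{Step 3 (choosing $\sigma$ and taking a union bound).} Gaussian matrix concentration gives $\|\U_l\|_2\le\sigma\sqrt{2h\ln(4nh)}$ for all $l$ simultaneously with probability at least $1/2$. Using the balance condition with $\beta_\w$, I would set
\[
\sigma\asymp\frac{\gamma}{n(B+\epsilon)(1+C_{fgm})\beta_\w^{\,n-1}\sqrt{h\ln(nh)}}
\]
so that the total perturbation is below $\gamma/4$. Substituting into $\|\w\|^2/(2\sigma^2)$, and using $\|\w\|^2=\sum_l\|\W_l\|_F^2$ with $\beta_\w^{2n}\asymp\prod_l\|\W_l\|_2^2$, gives $\KL=\mathcal O\big((B+\epsilon)^2n^2h\ln(nh)\Phi(f_\w)/\gamma^2\big)$.

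Because $\sigma$ depends on $\w$, I would fix the prior over a finite grid of values of $\tilde\beta$. The factor $M$ restricts the relevant range of $\beta_\w$ so that the grid has size polynomial in $m$ and $n\ln M$. A union bound over the grid then contributes the $n\ln\frac{mn\ln M}{\delta}$ term. Outside that range the bound is vacuous, since either the loss bound is trivially $1$ or the margin condition fails.
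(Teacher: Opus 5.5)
The paper only quotes this result from \citet{farnia2018generalizable} and gives no proof, so the comparison has to be with that argument in outline. Your template is the right one. You apply Lemma~\ref{lem:2.1} to the attacked classifier $f_\w(\x+\delta_\w(\x))$, split the output change into a fixed-input term and a moved-adversarial-point term, and use that $v\mapsto v/\|v\|$ is $(2/\kappa)$-Lipschitz on $\{\|v\|\ge\kappa\}$. The factor $\sum_l\prod_{j\le l}\|\W_j\|_2$ then comes from a smoothness-based bound on how the input gradient moves.

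\textbf{The gap is in Step 3.} There you silently use the ReLU normalization $\|\W_l\|_2=\beta_\w$ for every $l$, in two places:
\begin{itemize}
\item in your choice $\sigma\propto\beta_\w^{-(n-1)}$, since the perturbation actually involves $\sum_l\|\U_l\|_2/\|\W_l\|_2\approx\sigma\sqrt{h\ln(nh)}\sum_l 1/\|\W_l\|_2$;
\item in turning $\beta_\w^{2n-2}\sum_l\|\W_l\|_F^2$ into $\Phi(f_\w)$.
\end{itemize}
Absorbing $B+\epsilon$ ``after normalizing'' in Step 2 also relies on it; that factor should instead come from $\|\x\|\le B+\epsilon$ entering the change of the pre-activations.

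The rescaling to equal spectral norms is legitimate only because ReLU is positively homogeneous. Here the activations must be smooth, as your Step 2 needs, so rescaling layers changes $f_\w$, the FGM direction, and even $C_{fgm}$, because $\sum_l\prod_{j\le l}\|\W_j\|_2$ is not rescaling-invariant. Under the balance condition alone, a single isotropic $\sigma$ loses, in the worst case, a factor of order $M^4$ multiplying $\Phi(f_\w)$. This comes from $\sum_l 1/\|\W_l\|_2\le nM/\beta_\w$ and from $\|\W_l\|_F^2/\beta_\w^2\le M^2\|\W_l\|_F^2/\|\W_l\|_2^2$. The stated bound has $M$ only inside a logarithm.

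There is a second problem: your $\sigma$ contains $(1+C_{fgm})$, which is not a function of $\beta_\w$. For a fixed geometric mean, $\prod_{j\le l}\|\W_j\|_2$ ranges over $\beta_\w^l[M^{-\min(l,n-l)},M^{\min(l,n-l)}]$. A prior indexed only by a grid on $\tilde\beta$ is therefore not data-independent. Relatedly, a one-dimensional grid on $\beta_\w$ needs no $M$ at all, since its range comes from the non-vacuity argument, and it cannot produce the factor $n$ in front of the logarithm. Those two features of the statement point to the missing ingredient.

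\textbf{The repair is to use layer-wise scales.} Take $\ul_l\sim\mathcal N(0,\tau^2\tilde\beta_l^2\mathbf I)$ and the prior $\mathcal N\big(0,\mathrm{diag}(\tau^2\tilde\beta_l^2\mathbf I)\big)$. Index them by a grid point $(\tilde\beta_1,\dots,\tilde\beta_n)$ with $\big|\|\W_l\|_2-\tilde\beta_l\big|\le\|\W_l\|_2/n$ for all $l$. Then $\|\U_l\|_2\lesssim\tau\|\W_l\|_2\sqrt{h\ln(nh)}$, so the total perturbation is $\lesssim n\tau\sqrt{h\ln(nh)}(B+\epsilon)(1+C_{fgm})\prod_l\|\W_l\|_2$.

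The grid point fixes every $\|\W_l\|_2$ up to a factor $1\pm 1/n$. Hence it fixes $\prod_l\|\W_l\|_2$ and $C_{fgm}$ up to absolute constants, so $\tau$ is a function of the grid point alone. Moreover $\KL=\sum_l\|\W_l\|_F^2/(2\tau^2\tilde\beta_l^2)$ gives exactly $\mathcal O\big((B+\epsilon)^2n^2h\ln(nh)\Phi(f_\w)/\gamma^2\big)$, with no power of $M$.

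The balance condition is then used only to make the $n$-dimensional grid finite. Each $\|\W_l\|_2$ lies in $[\beta_\w/M,M\beta_\w]$, with $\beta_\w^n$ in the usual non-vacuous range. So each coordinate needs $\mathcal O(n\ln M+\ln m)$ points. The union bound over the resulting $\mathcal O(n\ln M+\ln m)^n$ priors yields the $n\ln\frac{mn\ln M}{\delta}$ term.
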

Note that under PGM attack or WRN attack, $C_{fgm}$ can also be replaced by $C_{pgm}$ or $C_{wrm}$.

Comparing \Cref{thm:2.2} and \Cref{thm:2.3}, the main differences of the upper bounds are located in different $\Phi(f_\w)$, $B$ and $(B+\epsilon)$.
It is difficult to optimize $B+\epsilon$ in the bound as it represents the magnitudes of the adversarial examples.
\citet{xiao2023pac} find $C_{fgm}$ in $\Phi(f_\w)$ of \Cref{thm:2.3} significantly loose the bound.
To address this issue, they improve the robust generalization bound and get the following result.
\begin{thm}[Robust generalization bound, \citet{xiao2023pac}]
\label{thm:2.4}
Given \Cref{thm:2.2}, let $f_{\mathbf{w}}: \mathcal{X}_B \rightarrow \mathcal{Y}$ be an $n$ hidden-layer neural networks with $h$ units per layer and ReLU activation function.
Then, for any $\delta,\gamma>0$, with probability at least $1-\delta$ over a training set of size $m$, for any $\w$, we have
\begin{small}
\begin{equation}
\begin{aligned}
&\Lc_{0}^{adv}(f_\w) \le  \widehat\Lc_{\gamma}^{adv}(f_\w)\\
&\quad\quad\quad\quad+\mathcal{O} \Bigg( \sqrt{ \frac{(B+\epsilon)^2n^2h\ln(nh)\Phi(f_\w)+\ln\frac{nm}{\delta}}{\gamma^2m}} \Bigg),
\end{aligned}
\end{equation}
\end{small}where $\Phi\left(f_{\mathbf{w}}\right):=\prod_{l=1}^n\left\|\mathbf{W}_l\right\|_2^2 \sum_{l=1}^n \frac{\left\|\mathbf{W}_l\right\|_F^2}{\left\|\mathbf{W}_l\right\|_2^2}$.
\end{thm}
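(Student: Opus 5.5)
We reduce \Cref{thm:2.4} to the deterministic-perturbation route of \citet{neyshabur2017pac}, applied to the adversarially composed predictor. Define the worst-case margin
\begin{small}
\[
g_\w(\x,y):=\min_{\|\delta\|_2\le\epsilon}\Big(f_\w(\x+\delta)[y]-\max_{j\ne y}f_\w(\x+\delta)[j]\Big).
\]
\end{small}
With this definition, $\Lc^{adv}_\gamma$ and $\widehat\Lc^{adv}_\gamma$ are exactly the margin losses of $g_\w$. The only property of $f_\w$ used in \Cref{lem:2.1} is a uniform sup-norm perturbation bound. So the first step is to rerun the proof of \Cref{lem:2.1} with $g$ in place of the margin of $f$. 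We need: if $\sup_{\x'\in\mathcal{X}_{B+\epsilon}}|f_{\w+\ul}(\x')-f_\w(\x')|_\infty<\gamma/4$, then $|g_{\w+\ul}(\x,y)-g_\w(\x,y)|<\gamma/2$ for all $(\x,y)\in\X\times\Y$. This holds because a min over $\delta$ of functions that are uniformly $\gamma/2$-close is itself $\gamma/2$-close. Every $\x+\delta$ lies in the ball of radius $B+\epsilon$, which is the only reason $B$ becomes $B+\epsilon$. This step gives the adversarial analogue of \Cref{lem:2.1} with the same KL term. It involves no attack-specific constant such as $C_{fgm}$, since we never differentiate through $\delta^*_\w$.

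The second step is the perturbation bound, taken from \citet{neyshabur2017pac}. For ReLU networks and $\|\ul_l\|_2\le\|\W_l\|_2/n$, we have
\begin{small}
\[
|f_{\w+\ul}(\x')-f_\w(\x')|_2\le e\,(B+\epsilon)\prod_l\|\W_l\|_2\sum_l\frac{\|\ul_l\|_2}{\|\W_l\|_2}.
\]
\end{small}
This is proved by induction on layers using $1$-Lipschitzness and positive homogeneity of ReLU.

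The third step chooses $\ul\sim\mathcal N(0,\sigma^2\mathbf I)$. Random-matrix concentration gives $\|\ul_l\|_2\le\sigma\sqrt{2h\ln(4nh)}$ for all $l$ with probability $\ge 1/2$. We then set
\begin{small}
\[
\sigma\asymp\frac{\gamma}{(B+\epsilon)\,n\sqrt{h\ln(nh)}\;\tilde\beta^{\,n-1}},
\]
\end{small}
where $\tilde\beta$ approximates $\beta_\w$. This makes the perturbation below $\gamma/4$. The resulting KL term is $\frac{\|\w\|_2^2}{2\sigma^2}$, which after weight normalization (rescaling layers to equal spectral norm, harmless for ReLU) is $\mathcal O\big((B+\epsilon)^2n^2h\ln(nh)\Phi(f_\w)/\gamma^2\big)$.

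The final step handles the fact that the prior cannot depend on $\beta_\w$. We cover a grid of candidate $\tilde\beta$ values of size $\mathcal O(nm)$, using $|\beta_\w-\tilde\beta|\le\beta_\w/n$, and discard the trivial regimes where the bound exceeds one. A union bound over the grid yields the $\ln\frac{nm}{\delta}$ term.

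The main obstacle is the first step: checking carefully that the margin-transfer argument of \Cref{lem:2.1} survives the inner minimization. Two points need care. First, the ``good'' set of $\ul$ must be defined using the sup over the enlarged input domain, not just the sample. Second, the comparison $\widehat\Lc^{adv}_{\gamma/2}(f_{\w+\ul})\le\widehat\Lc^{adv}_\gamma(f_\w)$ and its expected-loss counterpart must follow from the pointwise closeness of $g$. Everything after that is Neyshabur's calculation with $B\to B+\epsilon$.
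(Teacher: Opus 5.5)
Your proposal is correct and takes essentially the same route the paper relies on (the argument of \citet{xiao2023pac}). Your first step is \Cref{lem:3.3} together with Case~2 of \Cref{lem:A2}: the worst-case margin inherits the weight-perturbation bound with $B$ replaced by $B+\epsilon$, so no attack-specific term like $C_{fgm}$ arises. Your second step is \Cref{lem:A1}, with $\|\ul_l\|_2$ read as the spectral norm $\|\U_l\|_2$. The Gaussian concentration, choice of $\sigma$, KL computation and $\tilde\beta$-grid union bound are the \citet{neyshabur2017pac} calculation reproduced in the appendix.

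One point should be stated explicitly: $\Lc^{adv}_\gamma$ must be read as the margin loss of the worst case over the $\ell_2$ ball, as the paper does when it says the entire norm ball must be classified correctly. It cannot be the margin at the maximizer $\delta^*_\w$ of a surrogate loss $\ell$. Under that literal reading, $\delta^*_\w$ depends on $\w$ and your step-one transfer through the minimum no longer applies.
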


In comparing the results presented in \Cref{thm:2.3} and \Cref{thm:2.4}, it is clear that the robust generalization bound delineated in the latter is tighter, primarily due to the elimination of the additional term $C_{fgm}$. 
Consequently, in this work, we adopt the robust generalization framework as outlined in \Cref{thm:2.4}.

\section{Second-order statistics in robust generalization bound}
\label{sec:bound}

\blue{In this section, we develop a robust generalization bound based on non-spherical Gaussian random weights within the PAC-Bayesian framework.}
The assumption of random weights as a spherical Gaussian, though prevalent in various statistical and machine learning analyses \cite{neyshabur2017pac,xiao2023pac}, may not adequately capture the intricacies inherent in DNNs. 
In numerous instances, the random weights involved are interdependent, and their interrelationships play a crucial role in shaping the outcomes of our analyses. 
Consequently, it is imperative to reconsider the spherical Gaussian assumption of the posterior $Q$ (i.e., $\w+\ul$), advocating for a more adaptable PAC-Bayesian framework. 
This revised framework should account for correlations among different dimensions of $\ul$. 
Incorporating these correlations into the modeling of the posterior $Q$ allows for a more accurate representation of the complex dynamics present in the training data. 
Consequently, this approach enables the development of a PAC-Bayesian robust generalization bound that integrates a richer array of information.


\blue{
The organization of this section follows the structure illustrated in \Cref{fig:theory_flow}. 
In \Cref{sec:3.1}, we derive a robust generalization bound for non-spherical random weights (\Cref{thm:advbound}) by synthesizing three key components: the standard PAC-Bayesian bound (\Cref{thm:2.2}), the perturbed model bound (\Cref{lem:3.3}), and the random perturbation bound (\Cref{lem:3.4}), with detailed proofs provided in Appendix. 
\Cref{sec:3.2} extends this analysis to establish a refined bound (\Cref{thm:advbound2}) incorporating correlation matrices estimated from both clean and adversarial data. 
This development builds upon the correlation matrix definitions in \Cref{def:correlation matrix} and the assumptions about mixed correlation matrices in \Cref{ass:true correlation matrix}. 
Our analysis demonstrates that these correlation matrix norms---the second-order statistics---fundamentally influence the robust generalization bound.
}

\begin{figure}[t!]
\includegraphics[width=0.5
\textwidth]{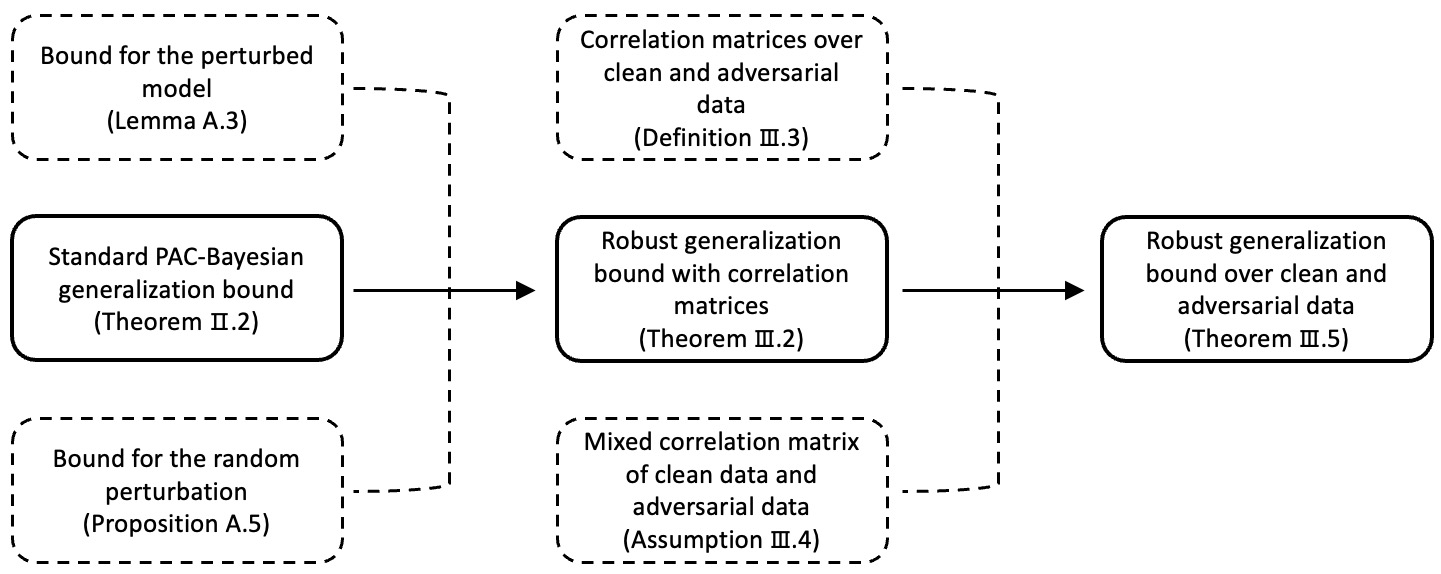}
\centering
\vspace{-5mm}
\caption{
\blue{Illustration of the theoretical framework: 
perturbation bound with consideration of correlated weights.
Under this framework, a standard generalization bound is extended to a robust generalization bound with weight correlation matrix, and further to a bound with weight correlation matrix estimated over both clean data and adversarial data.} 
}   
\vspace{-1mm}
\label{fig:theory_flow}
\end{figure}

\subsection{Robust generalization bound with correlation matrices}
\label{sec:3.1}


The adoption of a spherical Gaussian distribution for random weights has significantly streamlined the theoretical derivation of bounds in \citet{neyshabur2017pac} (refer to \Cref{thm:2.2}). 
Building on this assumption, \citet{farnia2018generalizable} formulate a robust generalization bound that takes into account the effects of adversarial attacks (as detailed in \Cref{thm:2.3}). 
Furthermore, \citet{xiao2023pac} advance this area by deriving a tighter robust generalization bound, leveraging Lipschitz continuity (see \Cref{thm:2.4}). 
However, considering the inherent complexity of neural networks, this spherical Gaussian assumption for $\ul$ may be overly restrictive, potentially overlooking critical aspects of network behavior.


\blue{In this subsection, we depart from the conventional assumption by treating $\ul$ as a non-spherical Gaussian, characterized by a correlation matrix $\R$, where generally $\R \neq \mathbf{I}$. 
This approach allows us to explore the influence of $\R$ on the established robust generalization bound. 
Before delving into the theoretical aspects, we first provide a precise definition of the correlation matrix $\R$.}

\begin{mydef}[Correlation matrix]
Let $\U_l$ be the matrixization of the random perturbation $\ul_l$, then we define
\begin{small}
\begin{equation}
\begin{aligned}
    \R_l := \frac{\E(\ul_l \ul_l^\top)}{\sigma^2},\quad 
    \R^c_l := \frac{\E(\U_l^\top \U_l)}{\sigma^2},\quad 
    \R^r_l := \frac{\E(\U_l \U_l^\top)}{\sigma^2},
\end{aligned}
\end{equation}
\end{small}where $\R_l$ is the correlation matrix of $\ul_l$, $\R^c_l$ and $\R^r_l$ are the correlation matrices of columns and rows in $\U_l$, respectively.
\end{mydef}



\blue{We now derive a margin-based robust generalization bound that incorporates weight correlation matrices. 
Our development builds upon the bound of the perturbed model from \citet{xiao2023pac}, which provides the foundation for the robust generalization bound in \Cref{thm:2.4}. 
We extend this by incorporating weight correlations into the random perturbation $\ul$, leading to a refined random perturbation bound that captures correlation-induced complexities. 
For clarity, both the bound for the perturbed model and the bound for the random perturbation are presented in the appendix. 
These developments culminate in the following PAC-Bayesian robust generalization bound.}

\begin{thm}[Robust generalization bound with correlation matrices]
\label{thm:advbound}
Given \Cref{thm:2.2}, the robust margin loss $\Lc_{\gamma}^{adv}(f_\w)$, and $f_{\mathbf{w}}: \mathcal{X}_B \rightarrow \mathcal{Y}$ as an $n$ hidden-layer neural networks with $h$ units per layer and ReLU activation function.
Let the posteriori $Q$ be over the predictors of the form $f_{\w+\ul}$, where $\ul$ is a non-spherical Gaussian with the correlation matrices $\R$, $\R^c$, and $\R^r$.
Then, for any $\delta, \gamma>0$, with probability at least $1-\delta$ over a training set of size $m$, for any $\w$, we have
\begin{small}
\begin{equation}
\begin{aligned}
&\Lc^{adv}_{0}(f_\w) \le  \widehat\Lc_{\gamma}^{adv}(f_\w)\\
&+\mathcal{O} \left( \sqrt{\frac{-\sum_l \ln \det \R_l+\ln\frac{nm}{\delta}+(B+\epsilon)^2c^2\Phi(f_\w)}{\gamma^2m}} \right),
\end{aligned}
\end{equation}
\end{small}where 
\begin{small}
\begin{equation}\nonumber
\begin{aligned}
    \Phi(f_\w)=\prod_{l=1}^n \|\W_l\|_2^2 \sum_{l=1}^n \frac{\|\W_l\|^2_F}{\|\W_l\|^2_2} \Big(\sum_l\big(\|\R^c_l\|_2^{\frac{1}{2}}+\|\R^r_l\|_2^{\frac{1}{2}}\big)\Big)^{2}.
\end{aligned}
\end{equation}
\end{small}
\end{thm}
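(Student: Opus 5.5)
The plan follows the Neyshabur-style PAC-Bayes argument, as in the proofs of \Cref{thm:2.2} and \Cref{thm:2.4}. The one change is that the posterior perturbation is now the non-spherical Gaussian $\ul_l \sim \mathcal{N}(0,\sigma^2\R_l)$, drawn independently across layers.

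\textbf{Step 1: the robust analogue of \Cref{lem:2.1}.} We apply \Cref{lem:2.1} to the robust margin loss. Following \citet{xiao2023pac}, this requires
\begin{equation*}
\Pro_\ul\Big(\max_{\x}\big|f_{\w+\ul}(\x+\delta^*_{\w+\ul}(\x))-f_\w(\x+\delta^*_{\w}(\x))\big|_\infty<\tfrac{\gamma}{4}\Big)\ge\tfrac12 .
\end{equation*}
We use the perturbed-model bound from the appendix (\Cref{lem:3.3}). Since the adversarial input lies in the ball of radius $B+\epsilon$, a Lipschitz/ReLU layer-peeling argument gives, whenever $\|\U_l\|_2\le \|\W_l\|_2/n$,
\begin{equation*}
|f_{\w+\ul}(\x')-f_\w(\x')| \le e(B+\epsilon)\prod_l\|\W_l\|_2\sum_l \frac{\|\U_l\|_2}{\|\W_l\|_2}.
\end{equation*}
This holds uniformly over the perturbed points. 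Handling the max over $\delta$ via the max-of-Lipschitz trick of \citet{xiao2023pac} is what removes the $C_{fgm}$ term.

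\textbf{Step 2: spectral-norm concentration for correlated Gaussian matrices (\Cref{lem:3.4}).} This is where $\R^c_l$ and $\R^r_l$ enter, and I expect it to be the main obstacle. The spherical case uses the Gaussian matrix bound $\E\|\U\|_2\lesssim \sigma\sqrt{h}$. Here $\U_l$ has a general covariance $\R_l$. I would use a matrix Gaussian series bound (Tropp): write $\U_l=\sum_k g_k \mathbf{A}_k$ with the $g_k$ i.i.d. standard normal. The variance parameter is then
\begin{equation*}
\max\big(\|\E\U_l\U_l^\top\|_2,\|\E\U_l^\top\U_l\|_2\big)=\sigma^2\max\big(\|\R^r_l\|_2,\|\R^c_l\|_2\big).
\end{equation*}
This yields, with probability at least $1-\tfrac{1}{2n}$ per layer,
\begin{equation*}
\|\U_l\|_2 \le c\,\sigma\big(\|\R^c_l\|_2^{1/2}+\|\R^r_l\|_2^{1/2}\big),
\end{equation*}
where $c$ absorbs the $\sqrt{\ln(nh)}$ factor. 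A union bound over layers then gives probability at least $1/2$. The delicate point is that $\R_l$ must be a genuine covariance, so the factorization into $\mathbf{A}_k$ via $\R_l^{1/2}$ has to be checked carefully. Substituting into Step 1, the perturbation stays below $\gamma/4$ provided
\begin{equation*}
\sigma \lesssim \frac{\gamma}{(B+\epsilon)c\,\beta_\w^{\,n-1}\sum_l(\|\R^c_l\|_2^{1/2}+\|\R^r_l\|_2^{1/2})}.
\end{equation*}
This uses the normalization $\|\W_l\|_2=\beta_\w$, which costs nothing for ReLU networks by homogeneity.

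\textbf{Step 3: the KL term.} Take the prior $P=\mathcal{N}(0,\sigma^2\mathbf{I})$ and the posterior $Q=\mathcal{N}(\w,\sigma^2\R)$ with block-diagonal $\R$. Then
\begin{equation*}
\KL(Q\|P)=\tfrac12\Big(\tfrac{\|\w\|^2}{\sigma^2}+\mathrm{tr}(\R)-\dim-\sum_l\ln\det\R_l\Big).
\end{equation*}
If the correlation matrices have unit diagonal, $\mathrm{tr}(\R)=\dim$ and these two terms cancel. This produces the $-\sum_l\ln\det\R_l$ term. The quadratic term is handled as in the spherical case: plugging in the largest admissible $\sigma$ from Step 2 gives
\begin{equation*}
\frac{\|\w\|^2}{\sigma^2}\lesssim (B+\epsilon)^2c^2\prod_l\|\W_l\|_2^2\sum_l\frac{\|\W_l\|_F^2}{\|\W_l\|_2^2}\Big(\sum_l(\|\R^c_l\|_2^{1/2}+\|\R^r_l\|_2^{1/2})\Big)^2,
\end{equation*}
which is $(B+\epsilon)^2c^2\Phi(f_\w)$.

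\textbf{Step 4: removing the dependence of $\sigma$ on $\w$.} The prior $\sigma$ cannot depend on $\w$. As in \citet{neyshabur2017pac}, we take a union bound over a grid of $O(nm)$ candidate values of $\beta_\w$ in the relevant range. Outside that range the bound is trivial. This contributes the $\ln\frac{nm}{\delta}$ term. Combining Steps 1--4 with \Cref{lem:2.1} and the $\mathcal{O}(\cdot)$ absorption of constants gives the stated bound.
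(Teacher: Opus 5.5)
Your proof has the same skeleton as the paper's. You normalize so that $\|\W_l\|_2=\beta$, bound every $\|\U_l\|_2$ by $c\sigma(\|\R^c_l\|_2^{1/2}+\|\R^r_l\|_2^{1/2})$ simultaneously with probability at least $\frac12$, and push this through the robust-margin perturbation bound of \citet{xiao2023pac} (\Cref{lem:A1}, part 2). You then take the largest admissible $\sigma$, evaluate the Gaussian KL, and union-bound over a grid of $\tilde\beta$.

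You genuinely differ in how you prove \Cref{lem:3.4}. The paper takes an expectation bound from \citet{bandeira2021spectral} and adds Gaussian concentration of $\|\U_l\|_2$ about its mean, governed by the weak variance $\sigma_*(\U_l)$. You instead write $\vecc(\U_l)=\sigma\R_l^{1/2}\boldsymbol{\xi}$, so $\U_l=\sum_k\xi_k\mathbf{A}_k$ with $\sum_k\mathbf{A}_k\mathbf{A}_k^\top=\sigma^2\R^r_l$ and $\sum_k\mathbf{A}_k^\top\mathbf{A}_k=\sigma^2\R^c_l$. This factorization is valid for any PSD $\R_l$, so the point you flag is harmless. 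You then apply Tropp's rectangular matrix Gaussian series bound $\Pro(\|\U_l\|_2\ge t)\le 2h\,e^{-t^2/(2v)}$ with $v=\sigma^2\max(\|\R^c_l\|_2,\|\R^r_l\|_2)$, which gives $c=\sqrt{2\ln(4nh)}$ in one step.

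Your route is more elementary, and it is literally the generalization of the tool \citet{neyshabur2017pac} used. For $\R_l=\mathbf{I}$ one has $\|\R^c_l\|_2=\|\R^r_l\|_2=h$, and you recover their $\sigma\sqrt{2h\ln(4nh)}$ exactly. The paper's mean-plus-fluctuation route could in principle split the logarithmic factors differently between the row and column terms. Since the stated bound absorbs everything into $c$ and the sum $\|\R^c_l\|_2^{1/2}+\|\R^r_l\|_2^{1/2}$, nothing is gained. Your KL computation is also slightly more careful than the paper's: you keep the $\frac12$ on $-\sum_l\ln\det\R_l$ and note that the trace term cancels only because $\R_l$ has unit diagonal. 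This is immaterial inside $\mathcal{O}(\cdot)$.

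One thing in Step 1 needs correcting. Your stated condition compares $f_{\w+\ul}$ at $\x+\delta^*_{\w+\ul}(\x)$ with $f_\w$ at $\x+\delta^*_\w(\x)$, which are two different points. A uniform same-point bound on $|f_{\w+\ul}(\x')-f_\w(\x')|$ over $\|\x'\|_2\le B+\epsilon$ does not control that difference, because the two maximizers need not be close. The max-of-Lipschitz trick only works on optimal values of a scalar function.

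So the requirement has to be posed, as in \Cref{lem:A2} (Case 2), for the robust margin operator: $\max_{i,j,\x}|RM(f_{\w+\ul}(\x),i,j)-RM(f_\w(\x),i,j)|\le\frac{\gamma}{2}$ with probability at least $\frac12$. This follows from $|\max_{\x'}g_{\w+\ul}(\x')-\max_{\x'}g_\w(\x')|\le\max_{\x'}|g_{\w+\ul}(\x')-g_\w(\x')|$, applied to $g_\w(\x')=M(f_\w(\x'),i,j)$ over the $\epsilon$-ball, together with your same-point output bound (doubled for a margin). With that restatement your Step 1 coincides with the paper's use of \Cref{lem:A1} and \Cref{lem:A2}, and the rest of your argument goes through.
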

\begin{proof}
    See Appendix (Proof for \Cref{thm:advbound}). 
\end{proof}


\blue{
\begin{myrem}
The primary challenge in establishing \Cref{thm:advbound} lies in bounding the KL divergence for non-spherical random perturbations. 
We address this by developing bounds for both the correlated random perturbation and the perturbed model, with detailed derivations provided in the Appendix. 
The resulting theorem demonstrates that correlation matrices fundamentally influence the robust generalization bound through two key properties: their determinant and spectral norm.
\end{myrem}
}

In the following subsection, we analyze how correlation matrices estimated from both clean and adversarial data impact the robust generalization bound, establishing a direct connection between matrix norms and model robustness.

\subsection{Weight correlation matrices over clean data and adversarial data}
\label{sec:3.2}


In popular adversarial training methods, such as TRADES~\citep{zhang2019theoretically}, the optimization process for the classifier incorporates both clean and adversarial training data.
When considering an adversarially trained classifier, denoted as $f_\w(\cdot)$, we model the random perturbation $\ul$ as a combination of $\ul_\x$ (the perturbation relied on clean data) and $\ul_{\x'}$ (the perturbation relied on adversarial data). 
As a result, the true correlation matrix, $\R$, is similarly a blend of $\R_\x$ (the correlation matrix associated with clean data) and $\R_{\x'}$ (the correlation matrix associated with adversarial data).
\blue{In this subsection, we discuss the influence of $\R_\x$ and $\R_{\x'}$ to the developed robust generalization bound.}

\begin{mydef}
\label{def:correlation matrix}
Given the input $\x$, the classifier $f_\w(\cdot)$ and the adversarial noise $\delta^{adv}_\w(\x)$, let $\x' = \x+\delta^{adv}_\w(\x)$, $\ul_\x$ and $\ul_{\x'}$ be Gaussian distributed random vectors over $\x$ and $\x'$ with each element being identically distributed as $\mathcal{N}(0,\sigma^2)$ but not independent one another,
with corresponding correlation matrices as follows:
\begin{small}
\begin{equation} 
\begin{aligned}
\R_{\x} &:= \frac{1}{\sigma^2} \E_\x(\ul_\x \ul_\x^\top),\quad
\R_{\x'} &:= \frac{1}{\sigma^2}\E_{\x}(\ul_{\x'} \ul_{\x'}^\top).
\end{aligned}
\end{equation}
\end{small}
\end{mydef}

\begin{myass}
\label{ass:true correlation matrix}
Given an adversarially trained model $f_\w(\cdot)$ and the data distribution $\D$, assume that the true perturbation $\ul$ is a combination of two random variables $\ul_\x$ and $\ul_{\x'}$, i.e., $\ul = q\ul_{\x}+(1-q)\ul_{\x'}$ with the correlation matrix $\R = q\R_{\x}+(1-q)\R_{\x'}$, where $q\in [0,1]$.
\end{myass}
Following \Cref{def:correlation matrix}, let $\R_{l,\x}$ denote the weight correlation matrix of $l$-th layer over clean dataset,
we let
\begin{small}
\begin{equation}
\begin{aligned}
\Lambda_{l,\max}&=\max\big(\lambda_{\max}(\R_{l,\x}),\lambda_{\max}(\R_{l,\x'})\big),\\
\Lambda_{l,\min}&=\min\big(\lambda_{\min}(\R_{l,\x}),\lambda_{\min}(\R_{l,\x'})\big),\\
\Lambda^c_{l,\max}&=\max\big(\|\R^c_{l,\x}\|_2^{\frac{1}{2}},\|\R^c_{l,\x'}\|_2^{\frac{1}{2}}\big),\\
\Lambda^r_{l,\max}&=\max\big(\|\R^r_{l,\x}\|_2^{\frac{1}{2}},\|\R^r_{l,\x'}\|_2^{\frac{1}{2}}\big),
\end{aligned}
\end{equation}
\end{small}where $\lambda_{\max}(\cdot)$ and $\lambda_{\min}(\cdot)$ are the largest and the smallest singular value of the matrix, respectively. 
Note that $\R_{l,\x}$, $\R^c_{l,\x}$ and $\R^r_{l,\x}$ are symmetric positive semi-definite matrices, thus their singular values and eigenvalues coincide. 
Then, we get the following theorem.

\begin{thm}[Robust generalization bound over clean and adversarial data]
\label{thm:advbound2}
Given \Cref{thm:advbound}, let $\ul$ be a non-spherical Gaussian with the correlation matrices $\R$, $\R^c$ and $\R^r$ over $\x$ and $\x'$.
Then, for any $\delta,\gamma>0$, with probability at least $1-\delta$ over a training set of size $m$, for any $\w$, we have
\begin{small}
\begin{equation}\nonumber
\begin{aligned}
&\Lc^{adv}_{0}(f_\w) \! \le \! \widehat\Lc_{\gamma}^{adv}(f_\w)\\
&\quad+\mathcal{O} \left( \sqrt{\frac{(B+\epsilon)^2c^2\Phi(f_\w)-\sum_l\ln (\Lambda_{l,\min}^{k_l} \Lambda_{l,\max}^{\hh-k_l})+\ln\frac{nm}{\delta}}{\gamma^2m}} \right),
\end{aligned}
\end{equation}
\end{small}where 
\begin{small}
\begin{equation}\nonumber
\begin{aligned}
    \Phi(f_\w)=\prod_{l=1}^n \|\W_l\|_2^2 \sum_{l=1}^n \frac{\|\W_l\|^2_F}{\|\W_l\|^2_2} \Big(\sum_l\big(\Lambda^c_{l,\max}+\Lambda^r_{l,\max}\big)\Big)^{2},
\end{aligned}
\end{equation}
\end{small}$k_l=(\hh\Lambda_{l,\max}-\hh)/(\Lambda_{l,\max}-\Lambda_{l,\min})$, and $\Lambda_{l,\min}^{k_l} \Lambda_{l,\max}^{\hh-k_l}$ is the lower bound of $\det \R_l$.
\end{thm}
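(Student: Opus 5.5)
The plan is to start from \Cref{thm:advbound} and replace each correlation-dependent quantity by a bound that depends only on $\Lambda_{l,\max}$, $\Lambda_{l,\min}$, $\Lambda^c_{l,\max}$ and $\Lambda^r_{l,\max}$. Under \Cref{ass:true correlation matrix}, the relevant matrices are convex combinations:
$$\R_l=q\R_{l,\x}+(1-q)\R_{l,\x'},$$
and analogously for $\R^c_l$ and $\R^r_l$. The second equality follows by applying the assumption blockwise, or simply by the same mixing convention for the column and row correlations.

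Two terms in \Cref{thm:advbound} need controlling.

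\textbf{Spectral-norm term in $\Phi(f_\w)$.} By the triangle inequality for the spectral norm,
$$\|\R^c_l\|_2\le q\|\R^c_{l,\x}\|_2+(1-q)\|\R^c_{l,\x'}\|_2\le \max\big(\|\R^c_{l,\x}\|_2,\|\R^c_{l,\x'}\|_2\big).$$
Taking square roots (monotone) gives $\|\R^c_l\|_2^{1/2}\le\Lambda^c_{l,\max}$, and likewise $\|\R^r_l\|_2^{1/2}\le\Lambda^r_{l,\max}$. Substituting these into $\Phi(f_\w)$ can only enlarge the bound, since $\Phi$ is increasing in these quantities.

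\textbf{Determinant term.} We need a lower bound on $\det\R_l$. By Weyl's inequalities every eigenvalue of the mixture $\R_l$ lies in $[\Lambda_{l,\min},\Lambda_{l,\max}]$; for instance, $\lambda_{\min}(\R_l)\ge q\lambda_{\min}(\R_{l,\x})+(1-q)\lambda_{\min}(\R_{l,\x'})$ by concavity of $\lambda_{\min}$. There is also a trace constraint. Each entry of $\ul_\x$ and $\ul_{\x'}$ has variance $\sigma^2$, so both component matrices have unit diagonal and hence $\mathrm{tr}\,\R_l=\hh$, where $\hh$ is the dimension of $\ul_l$. (For the mixture I would use the assumption's stated $\R=q\R_\x+(1-q)\R_{\x'}$ directly, which gives unit diagonal.)

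It then suffices to minimize $\sum_i\ln\lambda_i$ subject to $\lambda_i\in[\Lambda_{l,\min},\Lambda_{l,\max}]$ and $\sum_i\lambda_i=\hh$. Since $\ln$ is concave, the sum is Schur-concave, so its minimum over this polytope is attained at an extreme point: as many eigenvalues as possible at the endpoints, with at most one in between. Relaxing integrality, the minimizer has $k_l$ eigenvalues at $\Lambda_{l,\min}$ and $\hh-k_l$ at $\Lambda_{l,\max}$. The trace condition
$$k_l\Lambda_{l,\min}+(\hh-k_l)\Lambda_{l,\max}=\hh$$
gives exactly $k_l=(\hh\Lambda_{l,\max}-\hh)/(\Lambda_{l,\max}-\Lambda_{l,\min})$. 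Hence
$$\det\R_l\ge\Lambda_{l,\min}^{k_l}\Lambda_{l,\max}^{\hh-k_l}, \qquad -\ln\det\R_l\le-\ln\big(\Lambda_{l,\min}^{k_l}\Lambda_{l,\max}^{\hh-k_l}\big).$$
Summing over $l$ and plugging both bounds into \Cref{thm:advbound} yields the statement.

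\textbf{Main obstacle.} The delicate step is the determinant bound. First, the extreme-point argument must be made rigorous when $k_l$ is non-integer. I would handle this by noting that the non-integer extreme point, with one intermediate eigenvalue, has log-sum at least that of the fractional allocation, because $\ln$ restricted to a segment lies above its chord. Second, the argument relies on the trace normalization $\mathrm{tr}\,\R_l=\hh$ and on $\Lambda_{l,\max}\ge 1\ge\Lambda_{l,\min}$, which holds because the average eigenvalue is $1$; this guarantees $0\le k_l\le\hh$. The degenerate case $\Lambda_{l,\max}=\Lambda_{l,\min}$ forces $\R_l=\mathbf I$, and then the term vanishes.
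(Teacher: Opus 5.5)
Your proposal is correct and follows the same route as the paper. Under the mixing assumption you bound $\|\R^c_l\|_2^{1/2}$ and $\|\R^r_l\|_2^{1/2}$ by $\Lambda^c_{l,\max}$ and $\Lambda^r_{l,\max}$ via the Weyl/triangle inequality, confine the spectrum of $\R_l$ to $[\Lambda_{l,\min},\Lambda_{l,\max}]$, lower-bound $\det\R_l$ using the unit-diagonal (trace $h^2$) normalization, and substitute into \Cref{thm:advbound}. The only difference is that the paper cites a known determinantal lower bound for that step, whereas your chord argument proves it directly for any interval containing the spectrum; as a bonus, it does not need $\Lambda_{l,\min}$ and $\Lambda_{l,\max}$ to be the exact extreme eigenvalues of $\R_l$, a point the paper passes over.
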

\begin{proof}
    See Appendix (Proof for \Cref{thm:advbound2}). 
\end{proof}


\blue{
\begin{myrem}
\Cref{thm:advbound2} establishes a refined robust generalization bound that incorporates non-spherical random perturbation of weights over both clean and adversarial data. 
The theorem reveals that simultaneously minimizing the spectral norm and maximizing the determinant of correlation matrices yields a tighter bound. 
These matrix norms, which we term second-order statistics of weights, emerge as crucial factors in determining classifier robustness during adversarial training.
\end{myrem}
}

\section{Estimation and optimization}
\label{sec:4}

\begin{figure}[t!]
\includegraphics[width=0.49
\textwidth]{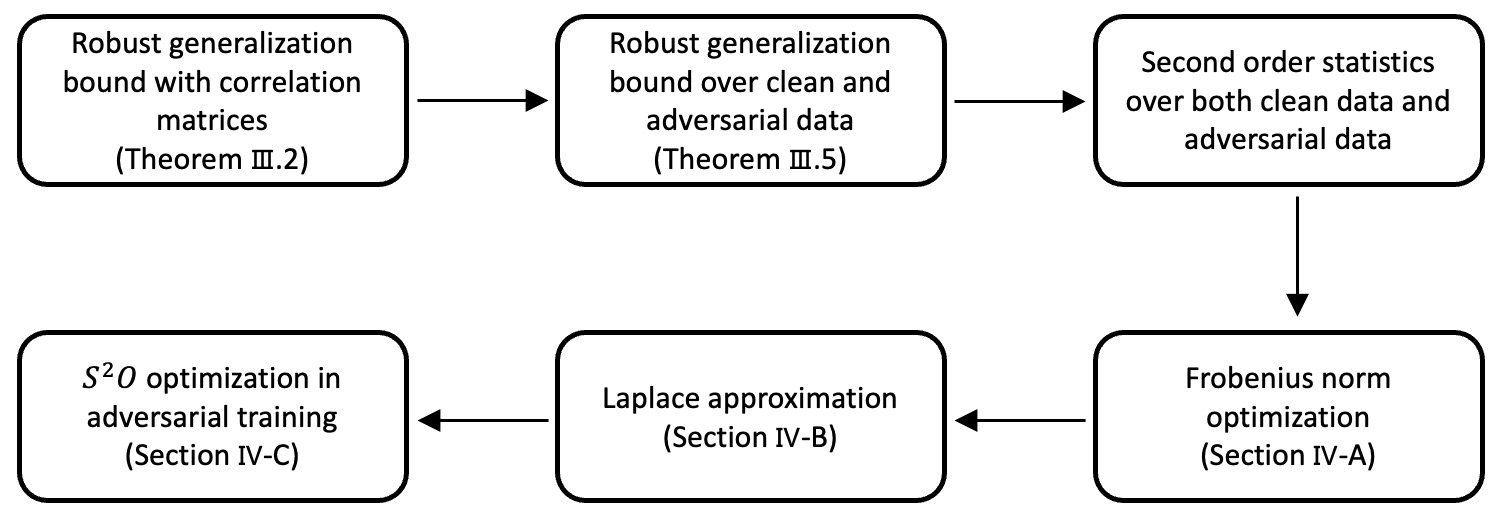}
\centering
\vspace{-5mm}
\caption{
\blue{Illustration of the optimization framework: 
\Cref{thm:advbound} and \Cref{thm:advbound2} show the influence of the second-order statistics of weights, over clean and adversarial data, on the robust generalization performance. 
In \Cref{sec:fnorm}, we demonstrate that these statistics can be approximately optimized using the Frobenius norm of the correlation matrix. 
To simplify this optimization process, we employ the Laplace approximation in \Cref{sec:laplace} and propose our adversarial training methodology, S$^2$O, in \Cref{sec:optimize}.}
}   
\vspace{-1mm}
\label{fig:method_flow}
\end{figure}

\blue{The structure of this section is delineated in \Cref{fig:method_flow}. 
Motivated by the theoretical insights from \Cref{thm:advbound} and \Cref{thm:advbound2}, we propose that second-order statistics of weights play a crucial role in determining DNN robustness. 
In \Cref{sec:fnorm}, we demonstrate that these statistics can be effectively captured and optimized through the Frobenius norm of the correlation matrix, $\|\R_l\|_F$. 
To enable efficient optimization, we employ Laplace approximation in \Cref{sec:laplace} to estimate the correlation matrix using Hessian. 
These developments culminate in our proposed adversarial training method, S$^2$O, presented in \Cref{sec:optimize}.}

\subsection{Frobenius norm optimization}
\label{sec:fnorm}
The adversarial training of a neural network is seen as a process of optimizing over an adversarial objective function. 
\Cref{thm:advbound} and \Cref{thm:advbound2} inspire us that the correlation matrix affects the robust generalization performance of neural networks, thus we apply the second-order statistics penalty terms, i.e., $-\ln \Lambda_{l,\min}^{k_l} \Lambda_{l,\max}^{\hh-k_l}$, $\Lambda^c_{l,\max}$, and $\Lambda^r_{l,\max}$, to the objective function $J_{\adv}$, and denote the new objective function as $\tilde{J}_{\adv}$.

Estimating and computing the gradient for the above norms in each iteration presents a formidable challenge due to its computational complexity. 
To address this, we propose approximating $\nabla_{\w_l}(-\ln\Lambda_{l,\min}^{k_l} \Lambda_{l,\max}^{\hh-k_l}+\Lambda^c_{l,\max}+\Lambda^r_{l,\max})$ through 
\begin{small}
\begin{equation}
    \nabla_{\w_l}(\g(\R_{l}))=\frac{\partial\big(\|\R_{l,\x}\|_F^2+\|\R_{l,\x'}\|_F^2\big)}{\partial \w_l},
\label{eq:R}
\end{equation}
\end{small}thereby reducing the algorithmic complexity.

While establishing a precise relationship between $\|\R_{l,\x}\|_F^2$, $\|\R_{l,\x'}\|_F^2$ and the above penalty terms is challenging, they are generally related. 
Specifically, when $\R_{l,\x}$ and $\R_{l,\x'}$ have uniform off-diagonal elements denoted by $r_\x$ and $r_{\x'}$, and $r_\x r_{\x'} \ge 0$, we observe the following correlations.
\begin{mypers}
\label{lem:4.1}
Decreasing $\|\R_{l,\x}\|_F^2$ and $\|\R_{l,\x'}\|_F^2$ leads to a decline in $|r_\x|$ and $|r_{\x'}|$, causing reduced $\Lambda^c_{l,\max}$ and $\Lambda^r_{l,\max}$.    
\end{mypers}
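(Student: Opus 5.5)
The plan is to work entirely in the stated special structure. $\R_{l,\x}$ and $\R_{l,\x'}$ are correlation matrices of size $\hh\times\hh$: unit diagonal, since each entry of $\ul$ has variance $\sigma^2$ and we divide by $\sigma^2$, and uniform off-diagonal entries $r_\x$, $r_{\x'}$. The Frobenius norm is then explicit:
\begin{equation*}
\|\R_{l,\x}\|_F^2=\hh+\hh(\hh-1)\,r_\x^2 ,
\end{equation*}
and likewise for $\x'$. This is strictly increasing in $|r_\x|$ and does not involve $r_{\x'}$. Decreasing $\|\R_{l,\x}\|_F^2+\|\R_{l,\x'}\|_F^2$, or each summand separately, therefore decreases $|r_\x|$ and $|r_{\x'}|$. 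That settles the first half of \Cref{lem:4.1}.

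For the second half I would express $\R^c_l$ and $\R^r_l$ in terms of $r$. With $\U_l$ an $h\times h$ matrix whose entries have unit normalized variance and pairwise correlation $r$, a direct computation gives:
\begin{itemize}
\item $(\E\U_l^\top\U_l)_{jk}/\sigma^2=\sum_i \E[U_{ij}U_{ik}]/\sigma^2$;
\item this equals $h$ on the diagonal and $hr$ off the diagonal.
\end{itemize}
Hence $\R^c_l=h\big((1-r)\mathbf{I}+r\mathbf{1}\mathbf{1}^\top\big)$, and by symmetry the same form holds for $\R^r_l$.

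Its eigenvalues are $h(1+(h-1)r)$, once, and $h(1-r)$, with multiplicity $h-1$. The spectral norm is the largest of these,
\begin{equation*}
\|\R^c_l\|_2=h\max\big(1+(h-1)r,\;1-r\big).
\end{equation*}
For $r\ge0$ this is $h(1+(h-1)r)$, which increases with $r=|r|$. For $r<0$ it is $h(1-r)=h(1+|r|)$, which increases with $|r|$. So in either case $\|\R^c_l\|_2$ is monotone increasing in $|r|$, and the same holds for $\|\R^r_l\|_2$ and for their square roots.

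Finally, $\Lambda^c_{l,\max}=\max(\|\R^c_{l,\x}\|_2^{1/2},\|\R^c_{l,\x'}\|_2^{1/2})$ is a maximum of two quantities, each increasing in $|r_\x|$ or $|r_{\x'}|$ respectively. Decreasing both $|r_\x|$ and $|r_{\x'}|$ therefore cannot increase it, and it strictly decreases unless the maximizer is unaffected. The same argument applies to $\Lambda^r_{l,\max}$.

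The condition $r_\x r_{\x'}\ge0$ serves a specific purpose. It ensures that the mixture $\R_l=q\R_{l,\x}+(1-q)\R_{l,\x'}$ from \Cref{ass:true correlation matrix} has off-diagonal entry $qr_\x+(1-q)r_{\x'}$ with magnitude $q|r_\x|+(1-q)|r_{\x'}|$. That magnitude is also decreasing, so the conclusion transfers consistently to the true correlation structure. I would remark on this point.

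The main obstacle is positive semidefiniteness. A correlation matrix with uniform off-diagonal entries requires $r\ge -1/(\hh-1)$, so the negative branch is nearly degenerate. The phrase ``reduced'' should also be read as ``non-increasing'' when only one of $|r_\x|,|r_{\x'}|$ attains the max. I would state these caveats explicitly rather than claim strict monotonicity in every case.
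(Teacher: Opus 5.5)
Your argument is correct and is essentially the paper's own proof. The paper also specializes to uniform off-diagonal entries, writes $\Lambda^c_{l,\max}=\Lambda^r_{l,\max}$ as $\sqrt{h(1+(h-1)\max(r_{\x},r_{\x'}))}$ when both $r\ge 0$ and $\sqrt{h(1-\min(r_{\x},r_{\x'}))}$ when both $r\le 0$, and reads off monotonicity. You additionally make the link $\|\R_{l,\x}\|_F^2=\hh+\hh(\hh-1)r_{\x}^2$ explicit, and you correctly note that $r_{\x}r_{\x'}\ge 0$ is not needed here, since each term in the max depends on only one $r$.

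One small slip. Decreasing the sum $\|\R_{l,\x}\|_F^2+\|\R_{l,\x'}\|_F^2$ only decreases $r_{\x}^2+r_{\x'}^2$, not each of $|r_{\x}|,|r_{\x'}|$, so the first half needs each term decreased separately, as the statement says. Also, as in the paper, each $r$ must keep its sign, because for $h>2$ the value $\|\R^c_l\|_2$ is not a function of $|r|$ alone across $r=0$.
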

\begin{mypers}
\label{lem:4.2}
Decreasing $\|\R_{l,\x}\|_F^2$ and $\|\R_{l,\x'}\|_F^2$ leads to an increase in $\Lambda_{l,\min}^{k_l} \Lambda_{l,\max}^{\hh-k_l}$.
\end{mypers}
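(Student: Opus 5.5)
The plan is to work in the simplified setting stated just before \Cref{lem:4.1}, and to compute everything in closed form. Each element of $\ul_\x$ and $\ul_{\x'}$ is distributed as $\mathcal{N}(0,\sigma^2)$, so $\R_{l,\x}$ and $\R_{l,\x'}$ have unit diagonal. Their off-diagonal entries are the constants $r_\x$ and $r_{\x'}$, and both matrices have dimension $N:=\hh$. Write $\mathbf{1}$ for the all-ones vector. Then $\R = (1-r)\mathbf{I} + r\mathbf{1}\mathbf{1}^\top$, which has spectrum $\{1+(N-1)r\}$ (multiplicity one) and $\{1-r\}$ (multiplicity $N-1$).

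The first step is to record that
\begin{equation}\nonumber
\|\R_{l,\x}\|_F^2 = N + N(N-1)r_\x^2 ,
\end{equation}
and the same formula holds for $\R_{l,\x'}$. This is strictly increasing in $|r_\x|$. Hence decreasing the two Frobenius norms is equivalent to decreasing $|r_\x|$ and $|r_{\x'}|$, exactly as in \Cref{lem:4.1}. It therefore suffices to show that $\Lambda_{l,\min}^{k_l}\Lambda_{l,\max}^{N-k_l}$ is a decreasing function of $|r_\x|$ and $|r_{\x'}|$.

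Next I split by the common sign, which is allowed because $r_\x r_{\x'}\ge 0$.

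\emph{Nonnegative case.} Let $r_M:=\max(r_\x,r_{\x'})\ge 0$. The largest eigenvalue of each matrix is $1+(N-1)r$ and the smallest is $1-r$. Taking the extremes over the two matrices gives $\Lambda_{l,\max}=1+(N-1)r_M$ and $\Lambda_{l,\min}=1-r_M$. Substituting into the definition of $k_l$ gives $k_l = N(N-1)r_M/(Nr_M)=N-1$. The quantity of interest becomes
\begin{equation}\nonumber
g(r_M)=(1-r_M)^{N-1}\bigl(1+(N-1)r_M\bigr).
\end{equation}

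\emph{Nonpositive case.} Let $s:=\max(|r_\x|,|r_{\x'}|)$. Now $\Lambda_{l,\max}=1+s$ and $\Lambda_{l,\min}=1-(N-1)s$. Then $k_l=Ns/(Ns)=1$, and the quantity becomes
\begin{equation}\nonumber
\tilde g(s)=\bigl(1-(N-1)s\bigr)(1+s)^{N-1}.
\end{equation}

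The degenerate case $r_M=0$ or $s=0$, where $k_l$ is $0/0$, I would handle by continuity, or by noting that both matrices are then $\mathbf{I}$.

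The final step is a monotonicity check by logarithmic differentiation. For the nonnegative case,
\begin{equation}\nonumber
\frac{d}{dr}\ln g = -\frac{N-1}{1-r}+\frac{N-1}{1+(N-1)r} = -\frac{N(N-1)r}{(1-r)(1+(N-1)r)}\le 0 .
\end{equation}
For the nonpositive case, a similar computation gives
\begin{equation}\nonumber
\frac{d}{ds}\ln\tilde g = -\frac{N(N-1)s}{(1-(N-1)s)(1+s)}\le 0 .
\end{equation}
Both hold on the positive semidefinite range, namely $r<1$ and $s<1/(N-1)$. So decreasing $|r_\x|$ and $|r_{\x'}|$ can only decrease the extremal parameter $r_M$ or $s$, and this increases the lower bound on $\det\R_l$.

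The main obstacle I expect is the bookkeeping in the step that computes $\Lambda_{l,\max}$, $\Lambda_{l,\min}$ and $k_l$. One has to confirm that the extremes over the two matrices are attained by the \emph{same} one, namely the matrix with the larger $|r|$. Only then does $k_l$ come out as an integer and the lower bound collapse to an actual determinant. This is precisely where $r_\x r_{\x'}\ge 0$ is needed, so I would state it carefully. I would also note that the maximum of $|r_\x|$ and $|r_{\x'}|$ is what the bound depends on, so the claimed increase is monotone (non-strict) when only the smaller correlation decreases.
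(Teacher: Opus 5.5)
Your proposal is correct and follows essentially the same route as the paper. Both use the equicorrelation spectrum to obtain $k_l=\hh-1$ in the nonnegative case and $k_l=1$ in the nonpositive case. Both then reduce $\Lambda_{l,\min}^{k_l}\Lambda_{l,\max}^{\hh-k_l}$ to $(1+(\hh-1)r)(1-r)^{\hh-1}$ evaluated at the extremal correlation, and check its monotonicity by differentiation.

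Your version is somewhat more careful than the paper's. You make explicit that $\|\R_{l,\x}\|_F^2=\hh+\hh(\hh-1)r_\x^2$ is increasing in $|r_\x|$. You also verify that both extremal eigenvalues come from the same matrix, which is where $r_\x r_{\x'}\ge 0$ is used. The paper leaves both points implicit.
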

\begin{proof}
    See Appendix (Proof for Propositions). 
\end{proof}

In addition to the above theoretical insights, we extend our analysis by conducting simulations in \Cref{fig:s}, to explore the relationship between $\|\R_{l,\x}\|_F^2$, $\|\R_{l,\x'}\|_F^2$ and 
$\Lambda_{l,\min}^{k_l} \Lambda_{l,\max}^{\hh-k_l}$, $\Lambda^c_{l,\max}$, $\Lambda^r_{l,\max}$. 
The findings from \Cref{lem:4.1}, \Cref{lem:4.2}, along with the results of the simulations in \Cref{fig:s}, suggest that regularizing the term $\g(\R_{l})$ can effectively decrease $\Lambda^c_{l,\max}$ and $\Lambda^r_{l,\max}$, while simultaneously increasing $\Lambda_{l,\min}^{k_l} \Lambda_{l,\max}^{\hh-k_l}$. 
This provides a practical approach to optimize these matrix norms, thereby contributing to our understanding of their interplay.


While the direct optimization of $\g(\R_{l})$ still poses significant computational challenges, the application of the Laplace approximation in \Cref{sec:laplace} offers a viable solution. 
This approximation method substantially simplifies the computation of $\g(\R_{l})$, making the optimization process more feasible and efficient.

\begin{figure}[t!]
\includegraphics[width=0.49
\textwidth]{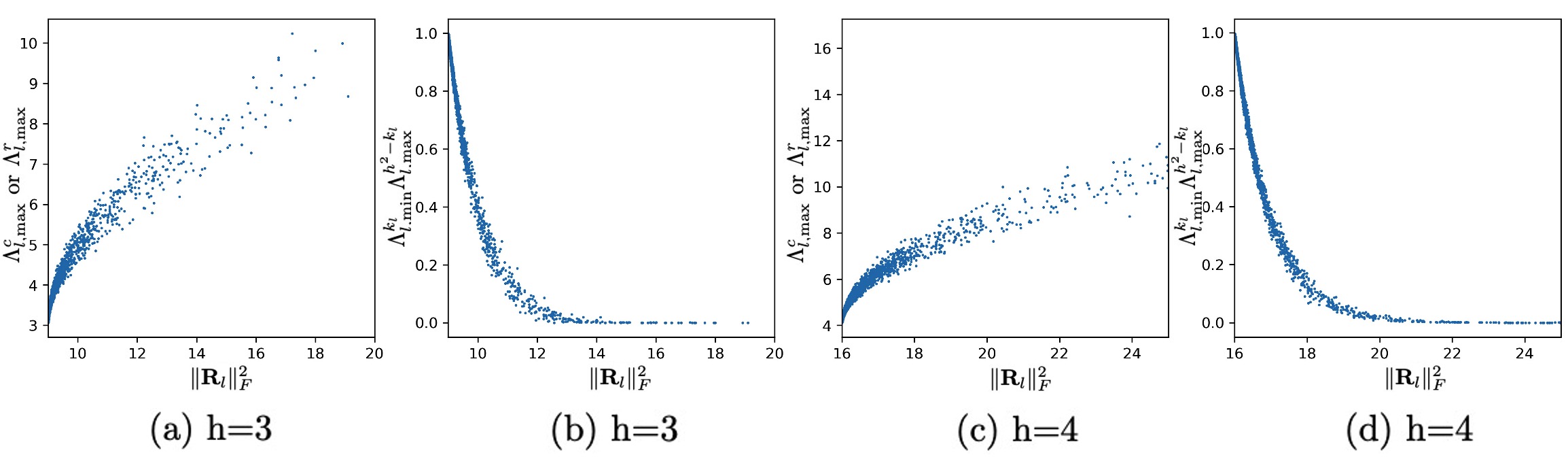}
\centering
\vspace{-7mm}
\caption{We sample 10000 9-dimensional correlation matrices and demonstrate \textbf{(a)} $\|\R_{l,\x}\|_F^2$, $\|\R_{l,\x'}\|_F^2$ w.r.t $\Lambda^c_{l,\max}$ or $\Lambda^r_{l,\max}$; 
\textbf{(b)} $\|\R_{l,\x}\|_F^2$, $\|\R_{l,\x'}\|_F^2$ w.r.t $\Lambda_{l,\min}^{k_l} \Lambda_{l,\max}^{\hh-k_l}$; 
\textbf{(c)} $\|\R_{l,\x}\|_F^2$, $\|\R_{l,\x'}\|_F^2$ w.r.t $\Lambda^c_{l,\max}$ or $\Lambda^r_{l,\max}$; 
\textbf{(d)} $\|\R_{l,\x}\|_F^2$, $\|\R_{l,\x'}\|_F^2$ w.r.t $\Lambda_{l,\min}^{k_l} \Lambda_{l,\max}^{\hh-k_l}$.
Note that the horizontal axis represents both $\|\R_{l,\x}\|_F^2$ and $\|\R_{l,\x'}\|_F^2$.
}   
\vspace{-1mm}
\label{fig:s}
\end{figure}

\subsection{Laplace approximation}
\label{sec:laplace}


The Laplace approximation is a widely used estimation technique within the Bayesian framework, employed for approximating posterior densities or moments, as discussed in \citet{ritter2018scalable}. 
This method approximates the posterior (e.g., $\w + \ul$) using a Gaussian distribution. 
It achieves this by employing the second-order Taylor expansion of the natural logarithm of the posterior, centered around its MAP estimate. 
To illustrate, for the weights at layer $l$ with an MAP estimate, denoted as $\w_l^*$ on $\St$, the $\ln$ probability of $\ln \Pro(\w_l+\ul_l|\St)$ can be approximately made up of the first Taylor term, $\ln \Pro(\w_l^*|\St)$, and the second Taylor term, $-\frac{1}{2}(\w_l-\w_l^*+\ul_l)^\top {\E_{\x}[\HH_{l}]} (\w_l-\w_l^*+\ul_l)$,
where $\E_{\x}[\HH_{l}]$ is the expectation of the Hessian matrix over input data $\x$, 
and the Hessian matrix $\HH_{l}$  
is given by
$\HH_{l}=\frac{\partial^2 \ell(f_{\w}(\x),y)}{\partial \w_l\partial \w_l}$.


Note that the first-order term of the Taylor polynomial, $\ln \Pro(\w_l^*|\St)$, is dropped as the gradient around the MAP estimate $\w_l^*$ is zero. 
Upon further examination, the second component of the expansion aligns precisely with the logarithm of the probability density function for a Gaussian-distributed multivariate random variable, characterized by mean $\w^*_l$ and covariance ${\Sigma_{l}}=\E_{\x}[\HH_{l}]^{-1}$. 
Consequently, it follows that $\w_l + \ul_l \sim \mathcal{N}(\w_l, \Sigma_{l})$. 
Here, $\Sigma_{l}$ can be viewed as the covariance matrix of $\ul_l$, and the learned weights $\w_l$ can be seen as the MAP estimate $\w_l^*$.

The Laplace approximation suggests that the efficient estimation of $\Sigma_{l}$ is feasible through the inverse of the Hessian matrix, given that ${\Sigma^{-1}_{l}} = \E_{\x}[\HH_{l}]$. 
Note that the term of $\Sigma^{-1}_{l,\x'} = \E_{\x'}[\HH_{l}]$ is omitted here due to its similarity to $\Sigma^{-1}_{l,\x} = \E_{\x}[\HH_{l}]$. 
Furthering this line of research, \citet{botev2017practical} and \citet{ritter2018scalable} have developed a Kronecker-factored Laplace approximation, drawing upon insights from second-order optimization techniques in neural networks. 
This approach, in contrast to traditional second-order methods in \citet{battiti1992first} and \citet{shepherd2012second} with high computational costs for deep neural networks, posits that the Hessian matrix of the $l$-th layer can be efficiently approximated using a Kronecker product, i.e.,

\vspace{-2mm}
\begin{small}
\begin{equation} \label{eq:hessian-decompose}
\begin{aligned}
    \HH_{l}=\underbrace{\ac_{l-1}\ac_{l-1}^\top}_{\mathcal{A}_{l-1}} \otimes \underbrace{\frac{\partial^2 \ell(f_{\w}(\x),y)}{\partial \h_{l}\partial \h_{l}}}_{\mathcal{H}_{l}}=\mathcal{A}_{l-1} \otimes \mathcal{H}_{l},
\end{aligned}
\vspace{-2mm}
\end{equation}
\end{small}where $\mathcal{A}_{l-1} \in \mathbb{R}^{h \times h}$ is the covariance of the post-activation of the previous layer, and $\mathcal{H}_{l} \in \mathbb{R}^{h \times h}$ is the Hessian matrix of the loss with respect to the pre-activation of the current layer, and $h$ is the number of neurons for each layer. 


To calculate $\E_{\x}[\HH_{l}]$, it is necessary to compute and aggregate the block diagonal matrix for each data sample. 
However, as the expectation of a Kronecker product is not always Kronecker factored, one would need to explicitly store the whole matrix $\HH_l$ to carry out this accumulation. 
With $h$ representing the dimensionality of a layer, the matrix would contain $\mathcal{O}(h^4)$ elements. 
For layers where $h$ is approximately $1000$, storing $\HH_l$ would require several terabytes of memory, rendering it impractical. 
Consequently, there is a pressing need for a diagonal block approximation that is not only computationally but also spatially efficient. 
To address this, we employ a factorized approximation as proposed by \citet{botev2017practical} and \citet{ritter2018scalable}:
\begin{small}
\begin{equation}
\label{eq11}
\begin{aligned}
    \E_{\x}[\HH_{l}] = \E_{\x}[\mathcal{A}_{l-1} \otimes \mathcal{H}_{l}]\approx \E_{\x}[\mathcal{A}_{l-1}] \otimes \E_{\x}[\mathcal{H}_{l}].
\end{aligned}
\end{equation}
\end{small}With this factorization, estimating the inverse of the Hessian for each layer becomes efficient through solving a Kronecker product form linear system, i.e.,
\begin{small}
\begin{equation}
\label{eq12}
\begin{aligned}
    \E_{\x}[\HH_{l}]^{-1} \approx \E_{\x}[\mathcal{A}_{l-1}]^{-1} \otimes \E_{\x}[\mathcal{H}_{l}]^{-1}.
\end{aligned}
\end{equation}
\end{small}

\subsection{Second-order statistics optimization (S$^2$O)}
\label{sec:optimize}


\blue{
Given \eqref{eq12}, we approximate the correlation matrix using the Hessian, where the correlation matrix over clean data comprises the right-hand two terms in \eqref{eq12}. 
A similar approximation applies to the correlation matrix over adversarial data. 
In this work, we focus on $\E_{\x}[\mathcal{A}_{l-1}]^{-1}$ and optimize $\|\R_{l,\x}\|_F^2$ by minimizing $\|\A_{l-1}\|_F^2$, where $\A_{l-1}$ is the normalized form of $\E_{\x}[\mathcal{A}_{l-1}]^{-1}$, i.e., for $\forall  0<i,j\le h$ and $i,j \in \N$,}
\begin{small}
\begin{equation}
\label{eq:2000}
    (\A_{l-1,\x})_{[ij]}=\frac{(\E_{\x}[\mathcal{A}_{l-1}]^{-1})_{[ij]}}{\sqrt{(\E_{\x}[\mathcal{A}_{l-1}]^{-1})_{[ii]}(\E_{\x}[\mathcal{A}_{l-1}]^{-1})_{[jj]}}}.
\end{equation}
\end{small}Then, we enhance the adversarial training by incorporating the regularization terms $\|\A_\x\|_F^2$ and $\|\A_{\x'}\|_F^2$, obtaining the new adversarial training objective function $\tilde{J}_{\adv}$ with
\begin{small}
\begin{equation}
\begin{aligned}
\nabla_{\w}\tilde{J}_{\adv}&=\nabla_{\w}J_{\adv}+\alpha(\nabla_{\w}\|\A_\x\|_F^2+\nabla_{\w}\|\A_{\x'}\|_F^2)\\
&=\nabla_{\w}J_{\adv}+\alpha\nabla_{\w}\g(\A).
\end{aligned}
\end{equation}
\end{small}Here, $\alpha \in [0,\infty)$ is a hyper-parameter to balance the relative contributions of the second-order statistics penalty term and the original objective function $J_{\adv}$. 
For computational efficiency, we only use the last layer to compute $\g(\A)$ in the experiments.

\begin{figure}[t!]
\includegraphics[width=0.5
\textwidth]{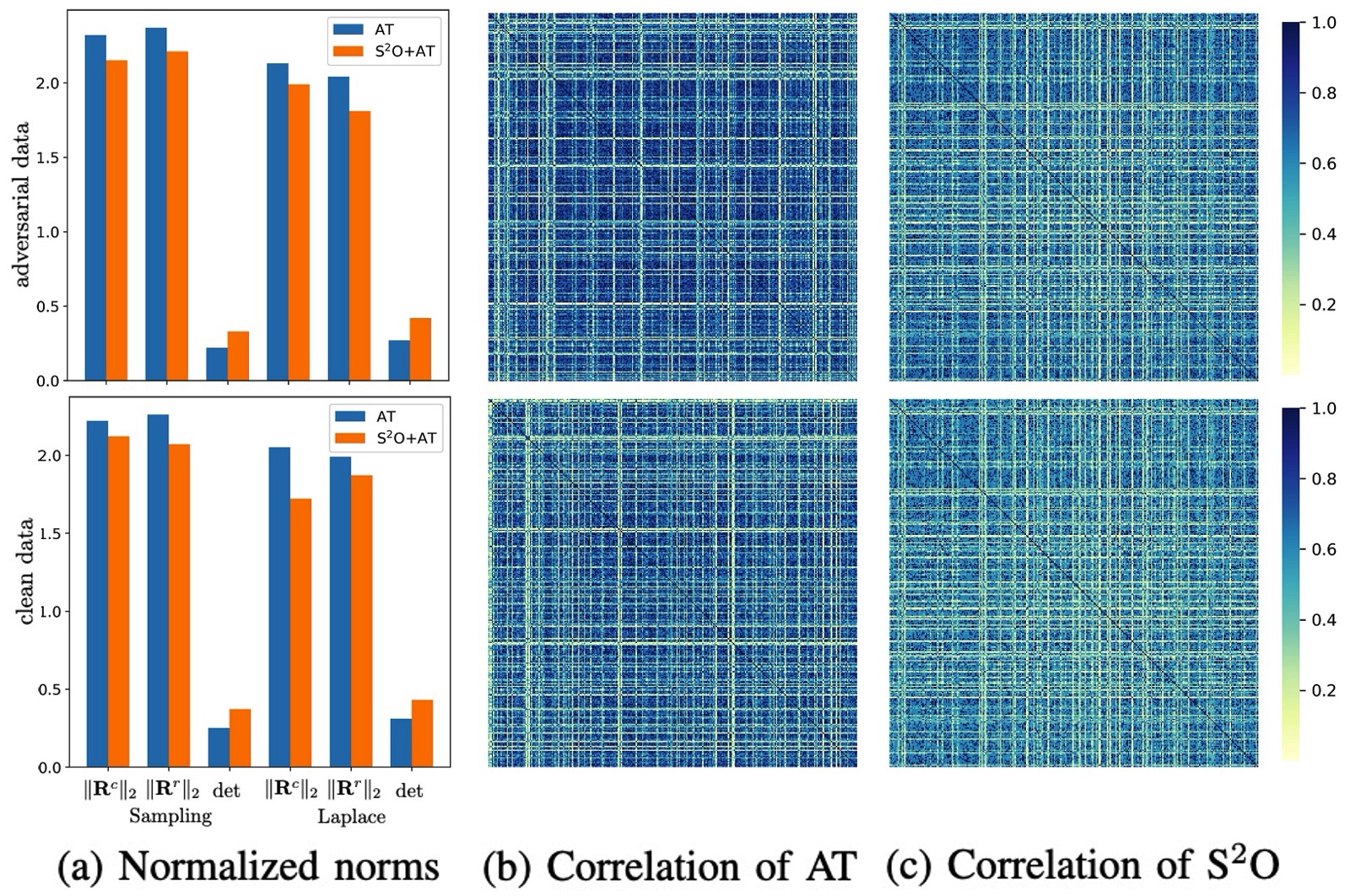}
\centering
\vspace{-7mm}
\caption{We train PreAct ResNet18 with AT and AT+S$^2$O on CIFAR-10, and show the results of partial weights. 
The results estimated from adversarial data are presented in the \textbf{top figures}, while the \textbf{bottom figures} show the results estimated from clean data.
\textbf{(a)} shows the normalized spectral norm of $\R^c$, ${\R}^r$, and the determinant of $\R$, with sampling estimation and Laplace approximation respectively. 
\textbf{(b)} and \textbf{(c)} demonstrate the absolute correlation matrix of partial weights, for AT and AT+S$^2$O respectively.}  
\vspace{-1mm}
\label{fig:2}
\end{figure}

\section{Experiments}
\label{sec:experiments}

\blue{In this section, we present our experimental evaluation, beginning with general experimental settings, while some specific configurations are detailed in their respective subsections. 
We first provide a comprehensive analysis of our second-order statistics optimization (S$^2$O) training method, demonstrating its effect to the second-order norms. 
We then assess the robustness of different methods against both white-box and black-box adversarial attacks across various benchmark datasets.}

\blue{
\emph{Datasets.} 
We evaluate our method on CIFAR-10/100~\cite{krizhevsky2009learning}, SVHN~\cite{netzer2011reading}, Tiny-ImageNet~\cite{deng2009imagenet}, and  Imagenette datasets. CIFAR-10 and CIFAR-100 contain $32\times 32$ pixel images with 10 and 100 categories respectively, SVHN consists of $32\times 32$ pixel digit images with 10 classes, and Tiny-ImageNet contains $64\times 64$ pixel images across 200 categories.
Imagenette is a subset of 10 classes from ImageNet-1K.
}

\blue{
\emph{Model architectures.} 
We evaluate different methods using PreAct ResNet-18 \citep{he2016deep} and WideResNet-34-10/28-10 \citep{zagoruyko2016wide} as primary architectures. 
We also conduct experiments  on ViT-B \cite{dosovitskiy2020image} and DeiT-S~\cite{touvron2021training} to further demonstrate the effectiveness of S$^2$O across architectures.
}

\blue{
\emph{Basic training setting.}
For the experiments of ResNet-18 and WideResNet training from scratch on CIFAR-10/100, the training duration spans 200 epochs using SGD with a momentum of 0.9, a batch size of 128, weight decay of $5 \times 10^{-4}$, and an initial learning rate of 0.1, which is reduced by a factor of 10 after 100 and 150 epochs. 
For the SVHN dataset, we maintain identical parameters except for an initial learning rate of 0.01. 
Additionally, simple data augmentation techniques, such as $32 \times 32$ random cropping with 4-pixel padding and random horizontal flipping, are utilized. 
In our default settings for S$^2$O, we use $\alpha=0.3$, except in the case of AT+S$^2$O (CIFAR-10) as specified in \Cref{tab:1}, where $\alpha=0.1$ is used. 
During adversarial training, we use PGD-10 with step size $2/255$ to craft $\ell_\infty$ adversarial examples and step size $15/255$ to craft $\ell_2$ adversarial examples.}

\blue{
\emph{Evaluation setting.}
In our experimental evaluation, we execute white-box attacks on models trained with both baseline methods and our S$^2$O-enhanced variants. 
These attacks include the FGSM \cite{goodfellow2014explaining}, PGD-20 \cite{madry2017towards}, and CW-20~\cite{carlini2017towards} (specifically, the $\ell_\infty$ version of the CW loss optimized using PGD-20).
We follow the commonly-used $\ell_\infty$ radius $8/255$ and $\ell_2$ radius $128/255$.
We also evaluate robustness using standard Auto Attack (AA) \cite{croce2020reliable}, an ensemble attack methodology that combines three white-box attacks (APGD-CE, APGD-DLR \cite{croce2020reliable}, and FAB \cite{croce2020minimally}) with a black-box attack (Square Attack \cite{andriushchenko2020square}). 
}

\blue{
\emph{Baselines.}
We introduce key baseline methods in this experimental study.
TRADES~\citep{zhang2019theoretically} establishes bounds on the gap between robust and clean error, motivating an adversarial training objective:
\begin{small}
\begin{equation}
\label{eq_trades}
    \mathop{\E}\limits_{(\x,y)\sim \St}\Big[ \ell(f_\w(\x),y)+\max_{\|\x'-\x\|_p\le \epsilon}\KL\big(f_\w (\x)\|f_\w (\x')\big)/\lambda\Big].
\end{equation}
\end{small}AWP~\citep{wu2020adversarial} extends the inner maximization by incorporating weight perturbations based on adversarial examples, optimizing a double-perturbation objective:
\begin{small}
\begin{equation}
\label{eq_awp}
\max_{\vec{v}\in\V}\mathop{\E}\limits_{(\x,y)\sim \St}\max_{\|\x'-\x\|_p\le \epsilon}\ell( f_{\w+\vec{v}}(\x'), y),
\end{equation}
\end{small}where $\V$ is a feasible region for the model weights perturbation. 
LBGAT~\citep{cui2021learnable} enhances robustness by replacing standard cross-entropy with logit margin regularization, maximizing the gap between the highest (correct class) and second-highest logits. 
Recent advances include leveraging DDPM-generated data for improved adversarial training~\citep{DBLP:journals/corr/abs-2202-10103,wang2023better}, employing stop-gradient operations to manage gradient conflicts~\citep{waseda2024rethinking} to improve the clean-robust trade-off (sum of clean accuracy and robust accuracy), and aligning logit distributions across different $\ell_p$ norm attacks~\citep{jiangramp} to improve multiple $\ell_p$ robustness.
}

\begin{table}[t]
\centering
\caption{Adversarial training across datasets on PreAct ResNet18 (\%). 
}
\label{tab:1}
\vspace{-2mm}
\renewcommand\arraystretch{1.35}
\scalebox{0.69}{
\begin{tabular}{clccccccccccc}
\specialrule{.1em}{.075em}{.075em}
Threat &\multirow{2}{*}{Method} & \multicolumn{3}{c}{CIFAR-10} && \multicolumn{3}{c}{CIFAR-100} && \multicolumn{3}{c}{SVHN} \\
\cline{3-5} \cline{7-9} \cline{11-13}
Model&& Clean & PGD & Time & & Clean & PGD & Time & & Clean & PGD & Time         
\\ \hline
\multirow{2}{*}{$\ell_\infty $} &AT & 82.41 & 52.77 & 309s & & 58.02 & 28.02 & 307s & & 93.17 & 60.91 & 509s \\
&\grr AT+S$^2$O & \grr \textbf{83.65} & \grr \textbf{55.11} & \grr 368s &\grr  & \grr \textbf{58.45} & \grr \textbf{30.58} & \grr 371s & \grr & \grr \textbf{93.39} &\grr \textbf{64.83} &\grr  595s  \\ 
\hline
\multirow{2}{*}{$\ell_2 $} &AT & 88.83 & 68.83 & 292s & & 64.21 & 42.20 & 290s & & 94.02 & 66.76 & 477s \\
&\grr AT+S$^2$O & \grr \textbf{89.57} & \grr \textbf{69.42} & \grr 364s &\grr  & \grr \textbf{65.32} & \grr \textbf{44.07} & \grr 366s & \grr  & \grr \textbf{94.93} & \grr \textbf{76.19} & \grr 586s \\ 
\specialrule{.1em}{.075em}{.075em}
\end{tabular}}
\vspace{2mm}
\centering
\caption{TRADES and AWP on CIFAR-10/100 with $\ell_\infty$ attack (\%).}
\label{tab:2}
\vspace{-2mm}
\renewcommand\arraystretch{1.35}
\resizebox{\linewidth}{!}{
\begin{tabular}{cclcccccc}
\specialrule{.1em}{.075em}{.075em}
Data/Net && \multicolumn{1}{c}{Method} && Clean & FGSM & PGD-20 & CW-20 & AA         \\ \cline{1-1} \cline{3-3}  \cline{5-9} 
\multirow{4}{*}{\shortstack{\scriptsize CIFAR-10\\ \scriptsize ResNet-18}} && TRADES && 82.89 & 58.72 & 53.81 & 51.83 & 48.6 \\
&&TRADES+AWP  && 82.30 & 59.48  & 56.18 & 53.12 & 51.7 \\
&&\grr TRADES+S$^2$O &\grr & \grr \textbf{84.15} & \grr 60.19 & \grr 55.20 & \grr 52.47 & \grr 49.5 \\
&&\grr TRADES+AWP+S$^2$O &\grr & \grr 83.79 & \grr \textbf{60.27} & \grr \textbf{57.29} & \grr \textbf{53.84} & \grr \textbf{52.4} \\
\cline{1-1} \cline{3-3}  \cline{5-9} 
\multirow{4}{*}{\shortstack{\scriptsize CIFAR-10\\ \scriptsize WRN}} && TRADES && 83.98 & 61.08 & 56.82 & 54.54 & 52.7 \\
&&TRADES+AWP  && 84.99 & 63.11 & 59.67 & 57.41 & \textbf{56.2}\\
&&\grr TRADES+S$^2$O &\grr & \grr 85.67 & \grr 62.73 & \grr 58.34 & \grr 55.36 & \grr 54.1\\
&&\grr TRADES+AWP+S$^2$O &\grr & \grr \textbf{86.01} & \grr \textbf{64.16} & \grr \textbf{61.12} & \grr \textbf{57.93} & \grr 55.9\\
\cline{1-1} \cline{3-3}  \cline{5-9} 
\multirow{5}{*}{\shortstack{\scriptsize CIFAR-100\\ \scriptsize WRN}} &&TRADES && 60.38 & 35.01 & 32.11 & 28.93 & 26.9 \\
&&TRADES+LBGAT  && 60.43 & - & 35.50 & \textbf{31.50} & 29.3 \\
&&TRADES+AWP && 60.27 & 36.12 & 34.04 & 30.64 & 28.5 \\
&& TRADES+S$^2$O && 63.40 & 35.96 & 33.06 & 29.57 & 27.6  \\
&& TRADES+AWP+S$^2$O && \textbf{64.17} & \textbf{37.98} & \textbf{35.95} & 31.26 & \textbf{29.9} \\
\specialrule{.1em}{.075em}{.075em}
\end{tabular}}
\vspace{-2mm}
\end{table}

\begin{table}[t!]
\centering
\caption{Experiments with DDPM generated data under $\ell_\infty$ attack (\%).}
\label{tab:ddpm}
\vspace{-1mm}
\renewcommand\arraystretch{1.35}
\scalebox{0.77}{
\begin{tabular}{cclcccc}
\specialrule{.1em}{.075em}{.075em} 
Dataset & & \multicolumn{1}{c}{Method} & Model & DDPM Data & Clean & AA \\ \hline
\multirow{12}{*}{\shortstack{CIFAR-10}} & & Lee et al. (2020) \cite{lee2020adversarial} & WRN-34-10 & - & 92.56 & 39.70 \\
& & Wang et al. (2020) \cite{wang2019improving} & WRN-34-10 & - & 83.51 & 51.10 \\
& & Rice et al. (2020) \cite{rice2020overfitting} & WRN-34-20 & - & 85.34 & 53.42 \\
& & Zhang et al. (2020) \cite{zhang2020attacks} & WRN-34-10 & - & 84.52 & 53.51 \\
& & Pang et al. (2021) \cite{pang2020bag} & WRN-34-20 & - & 86.43 & 54.39 \\
& & Jin et al. (2023) \cite{jin2023randomized} & WRN-34-20 & - & 85.98 & 54.20 \\
& & Gowal et al. (2020) \cite{DBLP:journals/corr/abs-2010-03593} & WRN-70-16 & - & 85.29 & 57.20 \\
& & Zhang et al. (2019) \cite{zhang2019theoretically} & WRN-34-10 & - & 84.65 & 53.04 \\
& & Wu et al. (2020) \cite{wu2020adversarial} & WRN-34-10 & - & 85.17 & 56.21 \\
& & Rebuffi et al. (2021) \cite{DBLP:journals/corr/abs-2103-01946} & WRN-28-10 & 1M & 85.97 & 60.73 \\
& & Pang et al. (2022) \cite{DBLP:journals/corr/abs-2202-10103} & WRN-28-10 & 1M & 88.61 & 61.04 \\
& & \quad + S$^2$O & \grr WRN-28-10 & \grr 1M & \grr \textbf{88.93} & \grr \textbf{61.45}  \\
\hline
\multirow{7}{*}{\shortstack{CIFAR-100}} & & Cui et al. (2021) \cite{cui2021learnable} & WRN-34-10 & - & 60.43 & 29.34 \\
& & Gowal et al. (2020) \cite{DBLP:journals/corr/abs-2010-03593} & WRN-70-16 & - & 60.86 & 30.03 \\
& & Zhang et al. (2019) \cite{zhang2019theoretically} & WRN-34-10 & - & 60.22 & 26.91  \\
& & Wu et al. (2020) \cite{wu2020adversarial} & WRN-34-10 & - & 60.38 & 28.63 \\
& & Rebuffi et al. (2021) \cite{DBLP:journals/corr/abs-2103-01946} & WRN-28-10 & 1M & 59.18 & 30.81 \\
& & Pang et al. (2022) \cite{DBLP:journals/corr/abs-2202-10103} & WRN-28-10 & 1M & 63.66 & 31.08 \\
& & \quad + S$^2$O & \grr WRN-28-10 & \grr 1M & \grr \textbf{63.77} & \grr \textbf{31.29}  \\
\hline
\multirow{3}{*}{\shortstack{Tiny-\\ImageNet}} & & Cui et al. (2021) \cite{DBLP:conf/nips/GowalRWSCM21} & WRN-28-10 & 1M &  60.95 & 26.66 \\
& & Wang et al. (2023) \cite{wang2023better} & WRN-28-10 & 1M & 65.19 & 31.30 \\
&& \quad + S$^2$O &  WRN-28-10 & 1M & \textbf{65.28} & \textbf{31.60}  \\ 
\specialrule{.1em}{.075em}{.075em} 
\end{tabular}}
\vspace{0mm}
\end{table}

\subsection{Second-order statistics}
\label{sec:5.1}

In this section, we analyze how our S$^2$O method affects the second-order statistics of weights, specifically examining changes in the weight correlation matrix.

We employ two complementary methods to minimize estimation errors: Laplace approximation (detailed in \Cref{sec:4}) and sampling method. 
The sampling method estimates second-order statistics using multiple weight samples of the form $\w+\ul$, where $\ul$ represents zero-mean Gaussian noise constrained by $|\Lc(f_{\w+\ul})-\Lc(f_{\w})|\le \epsilon'$ ($\epsilon'=0.05$ for CIFAR-10). 
This follows established sharpness-based methodologies \cite{keskar2016large,jiang2019fantastic}. 
Valid weight samples are obtained by training the noise-perturbed converged network for 50 epochs with a learning rate of 0.0001.

The experimental results in \Cref{fig:2}(a) demonstrate that S$^2$O effectively reduces the spectral norms of both clean ($\R^c$) and robust (${\R}^r$) correlation matrices while increasing the determinant of $\R$. Additionally, \Cref{fig:2}(b) and (c) reveal the ability of S$^2$O to reduce weight correlations. 
These findings demonstrate how S$^2$O systematically optimizes the statistical properties of weight matrices, directly impacting network performance.

\subsection{Applying S$^2$O on vanilla adversarial training}
\label{sec:5.2}

Initially, we deploy PreAct ResNet-18 to assess the efficacy of our S$^2$O method, integrated with vanilla PGD-10 adversarial training. 
This evaluation, accommodating both $\ell_{\infty}$ and $\ell_2$ threat models, spans a range of datasets, including CIFAR-10, CIFAR-100, and SVHN.

\blue{\Cref{tab:1} shows that S$^2$O-enhanced variants improve both clean accuracy and robustness against PGD-20 attacks across all datasets. 
Under the $\ell_{\infty}$ threat model on CIFAR-10, AT+S$^2$O achieves 2\%-3\% higher robust accuracy against PGD-20 attacks and 1\%-1.5\% higher clean accuracy compared to vanilla adversarial training. The computational overhead of S$^2$O is modest, increasing training time by approximately 20\% per epoch. These improvements remain consistent across datasets and attack scenarios while maintaining reasonable computational costs.
}

\blue{
\emph{Time efficiency.} As discussed above, S$^2$O introduces a modest increase in training time, primarily due to the computation of post-activation correlation matrices defined in \eqref{eq:2000} for both clean and adversarial data in each minibatch. 
To investigate potential efficiency improvements, we conducted experiments on CIFAR-10 ($\ell_\infty$) for ResNet-18 using correlation matrices computed only from adversarial data. 
This modification reduces training time from 368s/epoch to 340s/epoch, eliminating approximately half of the additional computational overhead (309s/epoch for AT). 
However, this efficiency gain comes with slight performance degradation: clean accuracy decreases from 83.65\% to 82.89\%, and PGD-20 accuracy from 55.11\% to 55.02\%. 
This demonstrates a clear trade-off between computational efficiency and model performance, allowing us to balance these factors based on their specific requirements.
}



\begin{table}[t!]
\centering
\caption{Experiments on CIFAR-10 with multiple $\ell_p$ attacks (\%).}
\label{tab:5}
\vspace{-2mm}
\renewcommand\arraystretch{1.35}
\scalebox{0.8}{
\begin{tabular}{lllcccccc}
\specialrule{.1em}{.075em}{.075em} 
\multicolumn{1}{c}{Radius} && \multicolumn{1}{c}{Method} && Clean & $\ell_\infty$ & $\ell_2$ & $\ell_1$ & Union
\\ \cline{1-1} \cline{3-3} \cline{5-9}
\multirow{4}{*}{\shortstack{\scriptsize $\ell_\infty: \frac{2}{255}$\\ \scriptsize $\ell_2: 0.5$\\ \scriptsize $\ell_1: 12$}} && Croce et al. (2022) \cite{croce2022adversarial} && 87.2 & 73.3 & 64.1 & 55.4 & 55.4    \\
&& Tramer et al. (2019) \cite{tramer2019adversarial} && 85.6 & 72.1 & 63.6 & 56.4 & 56.4    \\
&& Jiang et al. (2024) \cite{jiangramp} && 86.3 & 73.3 & 64.9 & 59.1 & 59.1    \\
&& \quad + S$^2$O && 87.1 & 73.6 & 65.3 & 59.9 & \textbf{59.9}    \\
\cline{1-1} \cline{3-3} \cline{5-9}
\multirow{4}{*}{\shortstack{\scriptsize $\ell_\infty: \frac{8}{255}$\\ \scriptsize $\ell_2: 1.5$\\ \scriptsize $\ell_1: 12$}} && Croce et al. (2022) \cite{croce2022adversarial} && 83.5 & 41.0 & 25.5 & 52.9 & 25.5    \\
&& Tramer et al. (2019) \cite{tramer2019adversarial} && 74.6 & 42.9 & 35.7 & 50.3 & 35.6    \\
&& Jiang et al. (2024) \cite{jiangramp} && 74.4 & 43.4 & 37.2 & 51.1 & 37.1    \\
&& \quad + S$^2$O && 74.9 & 43.4 & 37.6 & 52.6 & \textbf{37.5}     \\
\specialrule{.1em}{.075em}{.075em} 
\end{tabular}
}
\vspace{-2mm}
\end{table}
\begin{table}[t!]
\centering
\caption{Experiments on CIFAR to measure the trade-off ($\ell_\infty$, \%).}
\label{tab:tradeoff}
\vspace{-2mm}
\renewcommand\arraystretch{1.35}
\scalebox{0.8}{
\begin{tabular}{llccccccc}
\specialrule{.1em}{.075em}{.075em} 
\multicolumn{1}{c}{\multirow{2}{*}{Method}} && \multicolumn{3}{c}{CIFAR-10} && \multicolumn{3}{c}{CIFAR-100}   \\
&& Clean & AA & Sum && Clean & AA & Sum          \\ \cline{1-1} \cline{3-5} \cline{7-9} 
AT &&  86.06 & 46.26 & 132.32 && 59.83 & 23.94 & 83.77   \\
Suzuki et al. (2023) \cite{suzuki2023adversarial} && 90.24 & 50.20 & 140.44 && 73.05 & 24.32 & 97.37   \\
Waseda et al. (2025) \cite{waseda2024rethinking} &&  90.89 & 50.77 & 141.66 && 72.51 & 24.18 & 96.70   \\
\quad +S$^2$O && 90.45 & 53.16 & \textbf{143.61} && 72.36 & 27.09 & \textbf{99.45}    \\
\specialrule{.1em}{.075em}{.075em} 
\end{tabular}
}
\vspace{0mm}
\end{table}

\subsection{Compare with popular methods}

In the subsequent experiments, we utilize both PreAct ResNet-18 and WideResNet-34-10 to assess the efficacy of our S$^2$O method in conjunction with two state-of-the-art methods, TRADES and TRADES+AWP. 
These evaluations are conducted on CIFAR-10/100 under an $\ell_{\infty}$ threat model. 
The robustness of all defense models is rigorously tested against a suite of white-box attacks, including FGSM, PGD-20, CW-20, as well as the comprehensive Auto Attack.

As shown in \Cref{tab:2}, the S$^2$O-enhanced variants consistently outperform the existing models, with only one exception noted.
While the AWP method shows an improvement over TRADES, the integration of S$^2$O offers further enhancements. 
For instance, the S$^2$O-enhanced TRADES+AWP model on WideResNet exhibits a 1.45\% higher accuracy against PGD-20 attacks compared to the TRADES+AWP model. 
Similarly, the S$^2$O-enhanced TRADES model on PreAct ResNet-18 demonstrates a 1.39\% increase in accuracy against PGD-20 attacks compared to the TRADES-only model.
Additional experiments conducted on CIFAR-100, as shown in \Cref{tab:2}, reveal that S$^2$O also improves robustness compared to other methods, like TRADES+LBGAT~\cite{cui2021learnable}, under most attack scenarios.

\blue{
Moreover, we evaluate the effectiveness of S$^2$O when combined with DDPM-generated data for adversarial training. 
Based on the pre-trained WideResNet-28-10 models from \citet{DBLP:journals/corr/abs-2202-10103} and \citet{wang2023better}, we fine-tune the models for 5 epochs using 1 million DDPM-generated samples on CIFAR and Tiny-ImageNet datasets.
We incorporate S$^2$O ($\alpha=0.3$) into their training framework, maintaining original settings except for adopting a learning rate of $0.001$. 
Results in \Cref{tab:ddpm} demonstrate that S$^2$O enhances both clean and adversarial accuracy across all datasets, validating its effectiveness in DDPM-based adversarial training.
}



\begin{table}[t!]
\centering
\caption{Experiments on adversarially trained ViT, DeiT ($\ell_\infty$, \%).}
\label{tab:3}
\vspace{-2mm}
\renewcommand\arraystretch{1.35}
\scalebox{0.9}{
\begin{tabular}{clccccccc}
\specialrule{.1em}{.075em}{.075em} 
\multicolumn{1}{c}{\multirow{2}{*}{Network}} && \multicolumn{1}{c}{\multirow{2}{*}{Method}} && \multicolumn{2}{c}{CIFAR-10} && \multicolumn{2}{c}{Imagenette}          \\
&& && Clean & AA && Clean & AA          \\ \cline{1-1} \cline{3-3} \cline{5-6} \cline{8-9} 
\multirow{2}{*}{\shortstack{\scriptsize ViT-B}} && AT && 85.53 & 48.33 && 91.40 & 64.20   \\
&& AT+S$^2$O && 85.66 & \textbf{48.97} && 91.40 & \textbf{66.40}  \\
\cline{1-1} \cline{3-3} \cline{5-6} \cline{8-9} 
\multirow{2}{*}{\shortstack{\scriptsize DeiT-S}} && AT && 84.13 & 47.26 && 90.40 & 60.20   \\
&& AT+S$^2$O && 84.42 & \textbf{48.36} && 91.00 & \textbf{63.60}  \\
\specialrule{.1em}{.075em}{.075em} 
\end{tabular}
}
\vspace{3mm}
\centering
\caption{Sensitivity analysis on CIFAR-10 for ResNet-18 ($\ell_\infty$, \%).}
\label{tab:6}
\vspace{-2mm}
\renewcommand\arraystretch{1.35}
\scalebox{0.83}{
\begin{tabular}{lccccccc}
\specialrule{.1em}{.075em}{.075em} 
\multicolumn{1}{c}{Method} && Clean & AA && PGD-20$_{train}$ & PGD-20$_{test}$ & Gap 
\\ \cline{1-1} \cline{3-4} \cline{6-8}
AT && 82.41 & 47.1 && 62.33 & 52.77 & 9.56  \\
AT+S$^2$O (0.05) && 83.22 & \textbf{48.5} && 61.99 & 53.82 & 8.17  \\
AT+S$^2$O (0.1) && \textbf{83.65} & 48.3 && 61.50 & \textbf{55.11} & 6.39   \\
AT+S$^2$O (0.2) && 83.43 & 47.8 && 60.27 & 54.59 & 5.68  \\ 
AT+S$^2$O (0.3) && 82.89 & 46.5 && 59.36 & 54.24 & 5.12   \\
AT+S$^2$O (0.4) && 82.54 & 46.7 && 58.41 & 52.92 & 5.49   \\ 
\specialrule{.1em}{.075em}{.075em} 
\end{tabular}}
\end{table}

\subsection{Experiments under multiple $\ell_p$ attacks}

\blue{
Following the baseline configuration of \citet{jiangramp}, we train PreAct ResNet18 on CIFAR-10 with S$^2$O regularization ($\alpha=0.3$) under their framework. 
We evaluate model performance using clean accuracy, Auto Attack robustness across ${\ell_1, \ell_2, \ell_\infty}$ perturbations, and union accuracy (simultaneous robustness against all three norm attacks). 
The union accuracy, by definition, cannot exceed the lowest individual norm accuracy. 
As shown in \Cref{tab:5}, combining the framework of \citet{jiangramp} with S$^2$O achieves state-of-the-art union accuracy at two evaluation radii: ${\ell_\infty: \frac{2}{255}, \ell_2: 0.5, \ell_1: 12}$ and ${\ell_\infty: \frac{8}{255}, \ell_2: 1.5, \ell_1: 12}$. 
These findings demonstrate the ability of S$^2$O to enhance multi-norm robustness in existing defense methods.
}

\subsection{Experiments on robustness-accuracy trade-off}

\blue{
A fundamental challenge in adversarial training is the trade-off between robustness and accuracy, which recent works quantify as the sum of clean and robust accuracy \cite{suzuki2023adversarial,waseda2024rethinking}. 
Invariance regularization addresses this trade-off by promoting model stability under adversarial perturbations. 
Building on the invariance framework of \citet{waseda2024rethinking}, we train WideResNet-34-10 on CIFAR-10/100 by combining S$^2$O ($\alpha=0.3$) with their AR-AT method. 
Results in \Cref{tab:tradeoff} demonstrate that this combination surpasses existing approaches on both datasets in terms of the clean-robust accuracy sum.
}

\subsection{Experiments on ViTs}

\blue{
In further investigations, we utilize ViTs to evaluate the performance of our S$^2$O method in conjunction with vanilla adversarial training on CIFAR-10 and Imagenette under $\ell_{\infty}$ threat model. 
All ViT-B and DeiT-S models are pre-trained on ImageNet-1K and are adversarially trained for 40 epochs using SGD with weight decay $1\times 10^{-4}$, gradient clipping, and an initial learning rate of 0.1, decreased by a factor of 10 at epochs 36 and 38. 
Simple data augmentations such as random crop with padding and random horizontal flip are applied. 
To accommodate the smaller image size of CIFAR-10, we downsample the patch embedding kernel from $16 \times 16$ to $4 \times 4$.
The robustness of all defense models, including these S$^2$O-enhanced variants, is rigorously assessed against standard Auto Attack.

As shown in \Cref{tab:3}, S$^2$O improves both clean and robust accuracy of ViT-B and DeiT-S models against Auto Attack. 
The improvements are particularly pronounced on Imagenette, where S$^2$O enhances robust accuracy by 2.2\% for ViT-B and 3.4\% for DeiT-S.}

\subsection{Sensitivity analysis, black-box and BPDA evaluation}

In our study, we conduct a hyper-parameter sensitivity analysis, specifically examining the influence of $\alpha$ for ResNet-18 on CIFAR-10, as shown in \Cref{tab:6}.

We evaluate the effectiveness of S$^2$O under black-box (transfer) attacks by integrating it with vanilla adversarial training on ResNet-18 architectures. 
Using an $\ell_{\infty}$ threat model, we conduct experiments across CIFAR-10/100 and SVHN datasets, where transfer attacks are generated from identically trained adversarial models. 
Results in \Cref{tab:7} show that S$^2$O consistently enhances model robustness against transfer attacks.

\blue{Backwards Pass Differentiable Approximation (BPDA) \citep{athalye2018obfuscated} addresses the limitation of standard gradient-based attacks when defense mechanisms incorporate non-differentiable or gradient-masking operations. 
BPDA circumvents these obstacles by approximating the gradient of defense function using a differentiable surrogate function. 
We evaluate ResNet-18 models on CIFAR-10/100 using $\ell_\infty$ norm BPDA attacks with 20 iterations. Results in \Cref{tab:8} demonstrate that S$^2$O enhances model robustness even under BPDA evaluation.}

\begin{table}[t!]
\centering
\caption{Experiments on ResNet-18 with transfer attacks ($\ell_\infty$, \%).}
\label{tab:7}
\vspace{-2mm}
\renewcommand\arraystretch{1.35}
\scalebox{0.82}{
\begin{tabular}{lccccccccc}
\specialrule{.1em}{.075em}{.075em} 
\multirow{2}{*}{Method} && \multicolumn{2}{c}{CIFAR-10} && \multicolumn{2}{c}{CIFAR-100} && \multicolumn{2}{c}{SVHN}  \\
&& FGSM & PGD-20 && FGSM & PGD-20 && FGSM & PGD-20 \\ 
\cline{1-1} \cline{3-4} \cline{6-7} \cline{9-10}
AT && 64.32 & 62.63 && 38.55 & 37.36 && 71.77 & 63.76  \\
\grr AT+S$^2$O &\grr & \grr \textbf{65.63} & \grr \textbf{63.87} &\grr & \grr \textbf{39.68} & \grr \textbf{38.60} &\grr & \grr \textbf{72.20} & \grr \textbf{64.31} \\
\specialrule{.1em}{.075em}{.075em} 
\end{tabular}}
\vspace{3mm}
\centering
\caption{Experiments on ResNet-18 with BPDA attack ($\ell_\infty$, \%).}
\label{tab:8}
\vspace{-2mm}
\renewcommand\arraystretch{1.35}
\scalebox{0.82}{
\begin{tabular}{lcccccc}
\specialrule{.1em}{.075em}{.075em} 
\multirow{2}{*}{Method} && \multicolumn{2}{c}{CIFAR-10} && \multicolumn{2}{c}{CIFAR-100}  \\
&& Clean & BPDA && Clean & BPDA \\ 
\cline{1-1} \cline{3-4} \cline{6-7}
AT && 82.41 & 75.62 && 58.02 & 51.37  \\
AT+S$^2$O & & \textbf{83.65} & \textbf{78.87} && \textbf{58.45} & \textbf{54.01}  \\
\specialrule{.1em}{.075em}{.075em} 
\end{tabular}}
\vspace{-2mm}
\end{table}

\section{Conclusion}

This work studies a previously neglected aspect in the field of adversarial training, advocating for the systematic consideration of the second-order statistics of neural network weights.
By integrating theoretical advancements (updating the PAC-Bayesian framework), algorithmic innovations (efficient estimation of weight correlation matrix, effective adversarial training using S$^2$O), and comprehensive experimental results, our study demonstrates that incorporating second-order statistics of weights enhances both robustness and generalization performance, outperforming both vanilla adversarial training and state-of-the-art adversarial training methods.



{\appendix[Proof for Theorem~\ref{thm:advbound}]
In the following, we firstly present a set of fundamental definitions of margin operator and robust margin operator, and key theoretical results of \Cref{lem:3.3}, \Cref{lem:A1}, and \Cref{lem:A2}, drawing from \citet{xiao2023pac}.
Then, we introduce \cref{lem:3.4}. 
Following the theoretical framework developed in \citet{neyshabur2017pac,xiao2023pac}, we provide the proof which is structured into two primary steps. 
In the initial stage, we leverage \Cref{lem:A2} and \Cref{lem:3.4} to compute the maximal permissible perturbation $\ul$, ensuring compliance with the given condition on the margin $\gamma$. 
In the second phase, we focus on calculating the $\KL$ component of the bound, taking into account the previous perturbation. 
This calculation is pivotal for the derivation of the PAC-Bayesian bound.

\blue{First, we introduce the concept of a local perturbation bound \cite{xiao2023pac}. 
To illustrate this, we consider a scalar value function $g_\w(\x): \mathcal{X}_{B,d} \to \mathbb{R}$. 
This function could manifest in various forms, such as the $i$-th output of a neural network, represented by $f_\w(\x)[i]$, the margin operator defined by $f_\w(\x)[y] - \max\limits_{j \neq y} f_\w(\x)[j]$, or the robust margin operator. 
Each of these representations serves to elucidate different aspects of the output of the neural network in relation to the input $\x$ within the specified domain.

\begin{mydef}[Local perturbation bound]
\label{def:Local perturbation bound}
Given $\mathbf{x} \in \mathcal{X}_{B,d}$, we say $g_{\mathbf{w}}(\mathbf{x})$ has a $\left(L_1, \cdots, L_n\right)$-local perturbation bound w.r.t. $\w$, if
\begin{small}
\begin{equation}
\left|g_{\mathbf{w}}(\mathbf{x})-g_{\mathbf{w}^{\prime}}(\mathbf{x})\right| \leq \sum_{l=1}^n L_l\left\|\W_l-\W_l^{\prime}\right\|_2,
\end{equation}
\end{small}where $L_l$ can be related to $\mathbf{w}, \mathbf{w}^{\prime}$ and $\mathbf{x}$.
\end{mydef}

The bound in \Cref{def:Local perturbation bound} is pivotal in quantifying the variation in the output of the function $g_\w(\x)$, particularly in response to minor perturbations in the weights of DNNs. 
Building upon this foundation, we get the following Lemma, as detailed in the work of \citet{xiao2023pac}.

\begin{mylem}[Bound for the perturbed model, \citet{xiao2023pac}]
\label{lem:3.3}
If $g_{\mathbf{w}}(\mathbf{x})$ has a $\left(A_1|\mathbf{x}|, \cdots, A_n|\mathbf{x}|\right)$-local perturbation bound, i.e.,
\begin{small}
\begin{equation}
\left|g_{\mathbf{w}}(\mathbf{x})-g_{\mathbf{w}^{\prime}}(\mathbf{x})\right| \leq \sum_{l=1}^n A_l|\mathbf{x}|\left\|\W_l-\W_l^{\prime}\right\|_2,
\end{equation}
\end{small}the robustified function $\mathop{\max}\limits_{\left\|\mathbf{x}-\mathbf{x}^{\prime}\right\|_2 \leq \epsilon} g_{\mathbf{w}}\left(\mathbf{x}^{\prime}\right)$ has a $(A_1(|\mathbf{x}|+\epsilon),$$\cdots, A_n(|\mathbf{x}|+\epsilon))$-local perturbation bound.
\end{mylem}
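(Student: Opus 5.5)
The plan is to fix $\w,\w'$ and $\x$, and compare the two robustified values directly. Write $G_\w(\x):=\max_{\|\x-\x'\|_2\le\epsilon} g_\w(\x')$. The feasible set $\{\x':\|\x'-\x\|_2\le\epsilon\}$ is compact. If $g_\w$ is continuous in its input (true for ReLU networks and the margin operators considered), each maximum is attained; otherwise replace maximizers by near-maximizers and let the slack go to zero. Choose $\x^*_\w$ attaining $G_\w(\x)$ and $\x^*_{\w'}$ attaining $G_{\w'}(\x)$.

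The key one-sided estimate uses that $\x^*_\w$ is feasible for the problem defining $G_{\w'}(\x)$:
\begin{equation*}
G_\w(\x)-G_{\w'}(\x)=g_\w(\x^*_\w)-\max_{\|\x'-\x\|_2\le\epsilon} g_{\w'}(\x')\le g_\w(\x^*_\w)-g_{\w'}(\x^*_\w).
\end{equation*}
The hypothesis, applied at the point $\x^*_\w$ (note $\x^*_\w\in\mathcal{X}_{B+\epsilon,d}$, so the assumption must be read on the enlarged domain, exactly as the paper does for $\|\x\|_2\le B+\epsilon$), bounds the right-hand side:
\begin{equation*}
g_\w(\x^*_\w)-g_{\w'}(\x^*_\w)\le\sum_{l=1}^n A_l|\x^*_\w|\,\|\W_l-\W_l'\|_2 .
\end{equation*}
By the triangle inequality, $|\x^*_\w|\le|\x|+\|\x^*_\w-\x\|_2\le|\x|+\epsilon$.

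Swapping the roles of $\w$ and $\w'$, with the witness $\x^*_{\w'}$, gives the symmetric bound. Together these yield
\begin{equation*}
|G_\w(\x)-G_{\w'}(\x)|\le\sum_{l=1}^n A_l(|\x|+\epsilon)\|\W_l-\W_l'\|_2,
\end{equation*}
which is the claimed $(A_1(|\x|+\epsilon),\dots,A_n(|\x|+\epsilon))$-local perturbation bound.

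The only delicate point is that the constants $A_l$ may themselves depend on $\w,\w'$ and the input. We must check they are evaluated uniformly, i.e.\ that $A_l|\x|$ is a valid bound at every point of the $\epsilon$-ball. For the network bounds of \citet{xiao2023pac}, $A_l$ is a product of spectral norms independent of $\x$, so this holds, and monotonicity of $|\x|\mapsto A_l|\x|$ finishes the argument.
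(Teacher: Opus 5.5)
Your argument is correct and matches the standard proof of this lemma. The paper gives no proof of its own and imports the result from \citet{xiao2023pac}, whose argument is the same estimate you use: evaluate both robustified values at the maximizer of one of them, apply the hypothesis at that point, bound its norm by $|\x|+\epsilon$, and swap the roles of the two weight vectors. Your side remarks are also correct and are exactly the conditions implicit in the paper's use of the lemma (via \Cref{lem:A1}, where $A_l$ is a product of spectral norms and the bound holds for every input): near-maximizers if the maximum is not attained, the hypothesis being needed on the enlarged ball of radius $B+\epsilon$, and $A_l$ not depending on the input.
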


\Cref{lem:3.3} elucidates that the local perturbation bound of the robustified function, denoted as $\mathop{\max}\limits_{\|\mathbf{x}-\mathbf{x}^{\prime}\|_2 \leq \epsilon} g_{\mathbf{w}}(\mathbf{x}^{\prime})$, can be effectively approximated by the local perturbation bound of the function $g_\w(\x)$. 
This insight is fundamental in establishing the robust generalization bound in \Cref{thm:2.4}.}

\vspace{3mm}

\noindent \emph{Margin Operator}: Following the notation employed by \citet{bartlett2017spectrally,xiao2023pac}, the margin operator is defined for the true label $y$ given an input $\x$, as well as for a pair of classes $(i, j)$. 
This definition provides a clear and precise measure of class separation for the model.
\begin{equation}
\begin{aligned}
&M(f_\w(\mathbf{x}), y) = f_\w(\mathbf{x})[y] - \max_{i \neq y} f_\w(\mathbf{x})[i],\\
&M(f_\w(\mathbf{x}), i, j) = f_\w(\mathbf{x})[i] - f_\w(\mathbf{x})[j].
\end{aligned}
\end{equation}

\noindent \emph{Robust Margin Operator}: 
Analogously, the robust margin operator is also defined for a pair of classes $(i, j)$ with respect to $(\x,y)$. 
\begin{small}
\begin{equation}
\begin{aligned}
&RM(f_{\w}(\mathbf{x}), y) = \max_{\|\mathbf{x} - \mathbf{x'}\| \leq \epsilon} \left(f_{\w}(\mathbf{x'})[y] - \max_{j \neq y} f_{\w}(\mathbf{x'})[j]\right), \\
&RM(f_{\w}(\mathbf{x}), i, j) = \max_{\|\mathbf{x} - \mathbf{x'}\| \le \epsilon} \left(f_{\w}(\mathbf{x'})[i] - f_{\w}(\mathbf{x'})[j]\right).
\end{aligned}
\end{equation}
\end{small}

\noindent Based on the above definitions and \Cref{lem:3.3}, \citet{xiao2023pac} provide the form of $A_i$ for the margin operator through the following lemma.

\begin{mylem}[\citet{xiao2023pac}]
\label{lem:A1}
Consider $f_{\mathbf{w}}(\cdot)$ as an $n$-layer neural network characterized by ReLU activation functions. 

\vspace{2mm}

\noindent 1. Given $\x$ and $i, j$, the margin operator $M(f_\w(\x), i, j)$ has a $(A_1|\x|, · · · , A_n|\x|)$-local perturbation bound w.r.t. $\w$, where $A_l = 2e\prod^n_{l=1} \|\W_l\|_2/ \|\W_l\|_2$. And
\begin{small}
\begin{equation}
\begin{aligned}
&\left| M(f_{\mathbf{w}+\mathbf{u}}(\mathbf{x}), i, j) - M(f_{\mathbf{w}}(\mathbf{x}), i, j) \right| \\
&\quad\quad\;\quad\quad\quad\quad\quad\quad\quad\quad\quad\;\leq 2eB \prod_{l=1}^{n} \|\W_l\|_2 \sum_{l=1}^{n} \frac{\|\U_l\|_2}{\|\W_l\|_2}.
\end{aligned}
\end{equation}
\end{small}

\noindent 2. Given $\x$ and $i, j$, the robust margin operator $RM(f_\w(\x), i, j)$ has a $(A_1(|\x|+\epsilon), · · · , A_n(|\x|+\epsilon))$-local perturbation bound w.r.t. $\w$. And
\begin{small}
\begin{equation}
\begin{aligned}
&\left| RM(f_{\mathbf{w}+\mathbf{u}}(\mathbf{x}), i, j) - RM(f_{\mathbf{w}}(\mathbf{x}), i, j) \right| \\
&\quad\quad\quad\quad\quad\quad\quad\quad\quad\leq 2e(B+\epsilon) \prod_{l=1}^{n} \|\W_l\|_2 \sum_{l=1}^{n} \frac{\|\U_l\|_2}{\|\W_l\|_2}.
\end{aligned}
\end{equation}
\end{small}
\end{mylem}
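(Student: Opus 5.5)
The plan is to follow the standard Neyshabur-style argument for ReLU networks, layer by layer, using the positive homogeneity of ReLU.

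\emph{Part 1.} Fix $\x$ and $i,j$, and write $\W'_l=\W_l+\U_l$. Let $\Delta_l$ be the difference of the pre-activations $\h_l$ computed with $\W'$ and with $\W$. Since ReLU is $1$-Lipschitz and $\act(0)=0$, the unperturbed representation satisfies $|\h_l|\le |\x|\prod_{k\le l}\|\W_k\|_2$. The difference obeys the recursion
\[
|\Delta_l|\le \|\U_l\|_2\,|\x|\prod_{k<l}\|\W_k+\U_k\|_2+\|\W_l\|_2|\Delta_{l-1}|.
\]
The key restriction is to the regime $\|\U_l\|_2\le \|\W_l\|_2/n$. This is exactly the regime later enforced by the condition of \Cref{lem:2.1}; outside it we can fall back on trivially enlarging the constant. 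In this regime $\prod_{k}\|\W_k+\U_k\|_2\le (1+1/n)^n\prod_k\|\W_k\|_2\le e\prod_k\|\W_k\|_2$.

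Unrolling the recursion then gives
\[
|\Delta_n|\le e\,|\x|\prod_{l}\|\W_l\|_2\sum_l \frac{\|\U_l\|_2}{\|\W_l\|_2}.
\]
The margin $M(\cdot,i,j)$ is a difference of two output coordinates, so it picks up a factor $2$. This yields the $(A_1|\x|,\dots,A_n|\x|)$-local perturbation bound with $A_l=2e\prod_k\|\W_k\|_2/\|\W_l\|_2$. Using $|\x|\le B$ then gives the displayed inequality.

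\emph{Part 2.} I would simply invoke \Cref{lem:3.3} with $g_\w=M(f_\w(\cdot),i,j)$. The robust margin $RM(f_\w(\x),i,j)$ is the robustified function $\max_{\|\x'-\x\|\le\epsilon}g_\w(\x')$, so it inherits the $(A_l(|\x|+\epsilon))$-bound. The supporting fact is the elementary inequality $|\max_{x'}g-\max_{x'}g'|\le\max_{x'}|g-g'|$ together with $|\x'|\le|\x|+\epsilon$. Plugging in $|\x|\le B$ gives the second display with $B+\epsilon$.

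The main obstacle is the perturbed-product factor in Part 1. One has to be careful that the bound uses the unperturbed spectral norms, which requires the $1/n$ smallness of $\|\U_l\|_2$ relative to $\|\W_l\|_2$. I would state this assumption explicitly, as it holds for the perturbations used in the subsequent PAC-Bayes argument.
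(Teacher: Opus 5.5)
Your argument is correct and is the standard one. The paper has no proof of its own for this lemma; it imports it from \citet{xiao2023pac}. That proof rests on the perturbation lemma of \citet{neyshabur2017pac} for Part 1: the same layer-wise telescoping under $\|\U_l\|_2\le\frac{1}{n}\|\W_l\|_2$, with $(1+\frac{1}{n})^n\le e$ and a factor $2$ for a difference of two output coordinates. Part 2 goes exactly as you do it, through \Cref{lem:3.3} and $|\max_{\x'} g_{\w}(\x')-\max_{\x'} g_{\w'}(\x')|\le\max_{\x'}|g_{\w}(\x')-g_{\w'}(\x')|$. This step uses Part 1 at points $\x'$ of the $\epsilon$-ball lying outside $\X$. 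That is fine, because your Part 1 holds for every input with the explicit factor $|\x'|$.

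One correction: the smallness hypothesis cannot be dropped by ``trivially enlarging the constant'', because without it the displayed inequalities are false. Take a width-one ReLU chain with $\W_1=\dots=\W_{n-1}=1$, $\W_n=(1,0)^\top$, $\x=1$, $B=1$, and add $t>0$ to each of these nonzero entries. The margin $M(\cdot,1,2)$ then changes by $(1+t)^n-1\ge\binom{n}{2}t^2$, while the right-hand side is $2ent$. The bound therefore fails for $n\ge 2$ and $t>4e/(n-1)$. Without the hypothesis, your unrolled recursion only yields $2|\x|\big(\prod_l(\|\W_l\|_2+\|\U_l\|_2)-\prod_l\|\W_l\|_2\big)$, and it is precisely $\|\U_l\|_2\le\|\W_l\|_2/n$ that collapses this to the stated form.

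So you are right that the statement as printed is missing this hypothesis, and you must keep it rather than try to remove it. It is also not ``enforced by the condition of \Cref{lem:2.1}'', which constrains the output deviation, not $\|\U_l\|_2$; in the PAC-Bayes argument it has to be checked from the choice of $\sigma$. For spherical $\ul$ this works because of the factor $1/n$ in $\sigma$ together with $\beta^n\ge\gamma/(2B)$. With the correlation-weighted $\sigma$ in the proof of \Cref{thm:advbound}, that $1/n$ is replaced by the ratio $(\|\R^c_l\|_2^{1/2}+\|\R^r_l\|_2^{1/2})/\sum_k(\|\R^c_k\|_2^{1/2}+\|\R^r_k\|_2^{1/2})$, which need not be $\mathcal{O}(1/n)$. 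Your remark that the hypothesis holds downstream therefore needs a separate check.
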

\noindent Drawing upon \Cref{lem:A1}, \citet{xiao2023pac} develop the following lemma, which demonstrates that we can leverage the
weight perturbation of the robust margin operator to develop PAC-Bayesian framework. 
\begin{mylem}[\citet{xiao2023pac}]
\label{lem:A2}
Let $f_{\w}: \X \to \Y$ be any predictor with parameters $\w$, and P be any distribution on the parameters that is independent of the training data. 
Then, for any $\gamma, \delta>0$, with probability at least $1-\delta$ over the training set of size $m$, for any $\w$, and any random perturbation $\ul$ s.t.

\noindent Case 1. 
\begin{small}
\begin{equation}\nonumber
\begin{aligned}
\mathbb{P}_\ul\Bigg[\mathop{\max}_{i, j \in [n^y] \atop \x \in \X} |M(f_{\w+\ul}(\x), i, j) &- M(f_{\w}(\x), i, j)| \le \frac{\gamma}{2}\Bigg] \geq \frac{1}{2},
\end{aligned}
\end{equation}
\end{small}we have
\begin{small}
\begin{equation}\nonumber
\Lc_0(f_\w) \le \widehat{\Lc}_{\gamma}(f_\w) + 4\sqrt{\frac{\KL(Q_{\w + \ul}\|P)}{m - 1} + \ln \frac{6m}{\delta}}.
\end{equation}
\end{small}\noindent Case 2. 
\begin{small}
\begin{equation}\nonumber
\begin{aligned}
\mathbb{P}_\ul\Bigg[\mathop{\max}_{i, j \in [n^y]\atop \x \in \X} |RM(f_{\w+\ul}(\x), i, j) &- RM(f_{\w}(\x), i, j)| \le \frac{\gamma}{2}\Bigg] \geq \frac{1}{2},
\end{aligned}
\end{equation} 
\end{small}we have
\begin{small}
\begin{equation}\nonumber
\Lc_0^{adv}(f_\w) \le \widehat{\Lc}^{adv}_{\gamma}(f_\w) + 4\sqrt{\frac{\KL(Q_{\w + \ul}\|P)}{m - 1} + \ln \frac{6m}{\delta}}.
\end{equation}
\end{small}
\end{mylem}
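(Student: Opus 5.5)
The plan is to follow the derandomization argument of \citet{neyshabur2017pac} (the proof of \Cref{lem:2.1}), treating the two cases in parallel. For Case 2, the pointwise margin is replaced by the worst-case margin over the $\epsilon$-ball. Throughout, fix $\w$ and the random perturbation $\ul$ satisfying the stated probability condition.

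\emph{Step 1: from pairwise to full margins.} Define
\[
\St_\w=\Big\{\ul' : \max_{i,j,\x\in\X}|M(f_{\w+\ul'}(\x),i,j)-M(f_\w(\x),i,j)|\le \tfrac{\gamma}{2}\Big\},
\]
with $RM$ in place of $M$ for Case 2. By hypothesis $\Pro_\ul(\ul\in\St_\w)\ge \frac12$. I will show that for $\ul'\in\St_\w$ the label margin $M(f,y)=\min_{j\ne y}M(f,y,j)$ also moves by at most $\gamma/2$. This holds because a minimum of functions that each move by at most $\gamma/2$ moves by at most $\gamma/2$. For the robust case I will use the worst-case margin $\min_{\|\x'-\x\|\le\epsilon}\min_{j\ne y}(f(\x')[y]-f(\x')[j])$, which is what governs $\Lc^{adv}$. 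This is again a min (over $\x'$ and $j$) of quantities controlled pairwise; the uniform bound over $\x\in\X$ covers the points $\x'$, or one invokes part 2 of \Cref{lem:A1} together with \Cref{lem:3.3}. The consequences are two inequalities for every $\ul'\in\St_\w$: $\Lc_0(f_\w)\le \Lc_{\gamma/2}(f_{\w+\ul'})$ and $\widehat\Lc_{\gamma/2}(f_{\w+\ul'})\le\widehat\Lc_{\gamma}(f_\w)$. The same two inequalities hold with the adversarial versions.

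\emph{Step 2: conditioned posterior and PAC-Bayes.} Let $\tilde Q$ be the law of $\w+\ul$ conditioned on $\ul\in\St_\w$. Since $\St_\w$ has mass $Z\ge\frac12$, the standard computation gives
\[
\KL(\tilde Q\|P)\le \tfrac{1}{Z}\big(\KL(Q_{\w+\ul}\|P)+1\big)\le 2\KL(Q_{\w+\ul}\|P)+2 .
\]
Apply McAllester's PAC-Bayes bound to the $\gamma/2$-margin loss (or the robust $\gamma/2$-margin loss) under posterior $\tilde Q$. Averaging the Step 1 inequalities over $\tilde Q$ then yields $\Lc_0(f_\w)\le \widehat\Lc_\gamma(f_\w)+\sqrt{(\KL(\tilde Q\|P)+\ln\frac{2m}{\delta})/(2(m-1))}$. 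Absorbing constants then produces the factor $4$ and the $\ln\frac{6m}{\delta}$ term. The robust case is the same with $\Lc^{adv}$, since the loss is still a $[0,1]$ indicator of a fixed function of $(\x,y)$ and the weights.

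\emph{Main obstacle.} The step needing care is Case 2, where the reduction from pairwise robust-margin control to control of the adversarial $0$/$\gamma$ losses must be made precise. There are two issues. First, the max over the ball is taken of a difference, whereas the loss requires a $\min$ over $\x'$ of $\min_j$. Second, the prior $P$ must be data-independent even though $\ul$ may depend on $\w$. For the first issue I would argue directly: for each fixed $\x'$ in the ball, uniform pairwise control over $\X$ (or over the enlarged ball, as in \Cref{lem:A1}) gives at most $\gamma/2$ change in each $f(\x')[y]-f(\x')[j]$, and taking minima preserves this. For the second issue I would handle the dependence of $\St_\w$ on $\w$ exactly as \citet{neyshabur2017pac} do, since the PAC-Bayes bound holds simultaneously for all posteriors.
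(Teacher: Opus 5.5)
Your overall plan is the \citet{neyshabur2017pac} derandomization that the cited result rests on (the paper imports this lemma without proof): restrict to $\St_\w$, condition the posterior, use $\KL(\tilde Q\|P)\le 2\KL(Q_{\w+\ul}\|P)+2$, apply McAllester, and sandwich with margin inequalities. Case 1 and Step 2 are fine. The gap is in Step 1 for Case 2.

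The Case 2 hypothesis controls only the pairwise robust margin operators $RM(f(\x),i,j)$ at points $\x\in\X$. Each of these is an extremum over the ball $\{\x' : \|\x'-\x\|_2\le\epsilon\}$. It gives no information about the individual values $f(\x')[y]-f(\x')[j]$ at points $\x'$ inside the ball: two predictors can have identical extrema over every ball and still differ arbitrarily at interior points. Your step ``for each fixed $\x'$ in the ball, uniform pairwise control gives at most $\gamma/2$ change in each $f(\x')[y]-f(\x')[j]$'' therefore does not follow. It silently replaces the Case 2 hypothesis by a pointwise, Case 1-type hypothesis. Worse, it needs that hypothesis on the enlarged set $\|\x'\|_2\le B+\epsilon$, which is not contained in $\X$. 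The fallback to \Cref{lem:A1} and \Cref{lem:3.3} does not repair this. The lemma is for an arbitrary predictor, not a ReLU network, and part 2 of \Cref{lem:A1} again controls $RM$, not pointwise margins.

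The missing observation is an interchange of minima followed by a sign flip. The quantity that governs $\Lc^{adv}$ can be rewritten as
\[
\min_{\|\x'-\x\|_2\le\epsilon}\ \min_{j\ne y}\big(f(\x')[y]-f(\x')[j]\big)=\min_{j\ne y}\ \min_{\|\x'-\x\|_2\le\epsilon}\big(f(\x')[y]-f(\x')[j]\big).
\]
Each inner term is itself a pairwise robust margin. With the paper's max convention,
\[
\min_{\x'}\big(f(\x')[y]-f(\x')[j]\big)=-\max_{\x'}\big(f(\x')[j]-f(\x')[y]\big)=-RM(f(\x),j,y).
\]
The hypothesis controls this term because it ranges over all ordered pairs; with the min convention of the cited source it is literally $RM(f(\x),y,j)$.

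Hence, for $\ul'\in\St_\w$, the robust label margin at every $\x\in\X$ is a minimum over $j$ of quantities that each move by at most $\gamma/2$, so it moves by at most $\gamma/2$. This gives $\Lc^{adv}_0(f_\w)\le\Lc^{adv}_{\gamma/2}(f_{\w+\ul'})$ and $\widehat\Lc^{adv}_{\gamma/2}(f_{\w+\ul'})\le\widehat\Lc^{adv}_{\gamma}(f_\w)$ directly from the stated hypothesis, with no pointwise evaluation inside the balls. With this replacement, your Step 2 goes through unchanged.
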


\blue{
In our research, we integrate correlated relationships into the random perturbation $\ul$. 
This integration results in a more intricate bound for $\ul$, reflecting the complexities introduced by these correlations.

\begin{mypers}[Bound for the random perturbation]
\label{lem:3.4}
Consider the random perturbation matrix $\U_l$ with correlation matrices $\R_l^c$ and $\R_l^r$, with probability at least $\frac{1}{2}$, we have 
\begin{small}
\begin{equation}
\|\U_l\|_2 \le c\Big ( \|\R^c_l\|_2^{\frac{1}{2}}+\|\R^r_l\|_2^{\frac{1}{2}} \Big)\sigma.
\end{equation}
\end{small}Here, $c > 0$ represents a universal constant, which is related to factors such as $h$, $n$, and various other relevant parameters.
\end{mypers}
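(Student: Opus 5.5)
The plan is to reduce the claim to a standard tail bound for the spectral norm of a Gaussian matrix with separable (Kronecker-type) covariance. I will model $\U_l$ as a matrix-variate Gaussian, i.e. $\vecc(\U_l)\sim\mathcal{N}(0,\sigma^2\R_l)$ with $\R_l\approx \R^c_l\otimes\R^r_l$ up to normalization. This is consistent with the Kronecker-factored structure used later in \eqref{eq:hessian-decompose}. Under this model we can write
\begin{equation}\nonumber
\U_l=\sigma\,(\R^r_l)^{\frac{1}{2}}\,\mathbf{G}\,(\R^c_l)^{\frac{1}{2}},
\end{equation}
where $\mathbf{G}$ has i.i.d.\ standard normal entries. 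If one does not want to impose exact separability, the alternative route is a matrix concentration inequality for Gaussian series, $\U_l=\sum_k g_k\mathbf{B}_k$. That inequality yields
\begin{equation}\nonumber
\E\|\U_l\|_2\lesssim\sqrt{\ln(2h)}\,\max\big(\|\E\U_l\U_l^\top\|_2^{\frac{1}{2}},\|\E\U_l^\top\U_l\|_2^{\frac{1}{2}}\big),
\end{equation}
and by definition this maximum equals $\sigma\max(\|\R^r_l\|_2^{1/2},\|\R^c_l\|_2^{1/2})$. I regard this second route as the cleanest, because it only uses the row and column second moments that define $\R^r_l$ and $\R^c_l$.

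The key steps, in order, are as follows.

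\emph{Step 1.} Express $\U_l$ as a Gaussian matrix series, or in the factored form above, and identify the two variance parameters $v_r=\|\E(\U_l\U_l^\top)\|_2=\sigma^2\|\R^r_l\|_2$ and $v_c=\|\E(\U_l^\top\U_l)\|_2=\sigma^2\|\R^c_l\|_2$.

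\emph{Step 2.} Apply the matrix Gaussian series tail bound (Tropp), which gives
\begin{equation}\nonumber
\Pro\big(\|\U_l\|_2\ge t\big)\le 2h\,\exp\!\Big(-\frac{t^2}{2\max(v_r,v_c)}\Big).
\end{equation}

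\emph{Step 3.} Set the right-hand side equal to $\frac{1}{2}$ and solve for $t$:
\begin{equation}\nonumber
t=\sqrt{2\ln(4h)}\,\sigma\max\big(\|\R^r_l\|_2^{\frac{1}{2}},\|\R^c_l\|_2^{\frac{1}{2}}\big).
\end{equation}

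\emph{Step 4.} Bound the maximum by the sum $\|\R^c_l\|_2^{1/2}+\|\R^r_l\|_2^{1/2}$ and absorb $\sqrt{2\ln(4h)}$ into the constant $c$. The statement explicitly allows $c$ to depend on $h$, $n$ and similar parameters.

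In the factored route, Steps 2 and 3 are replaced by $\|\U_l\|_2\le\sigma\|\R^r_l\|_2^{1/2}\|\mathbf{G}\|_2\|\R^c_l\|_2^{1/2}$ together with $\|\mathbf{G}\|_2\le 2\sqrt{h}+t$ with high probability. This produces a product rather than a sum. Since both norms are normalized to be at least $1$ (the diagonals are $1$, so the traces are fixed), it is better to use a Gordon/Chevet-type bound:
\begin{equation}\nonumber
\E\|\mathbf{A}\mathbf{G}\mathbf{B}\|_2\le\|\mathbf{A}\|_2\|\mathbf{B}\|_F+\|\mathbf{A}\|_F\|\mathbf{B}\|_2.
\end{equation}
Bounding $\|\cdot\|_F\le\sqrt{h}\,\|\cdot\|_2^{1/2}$-type quantities via the unit-diagonal normalization then recovers the additive form with a $\sqrt{h}$ factor in $c$. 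Markov's inequality converts the expectation bound into a probability-$\frac{1}{2}$ statement.

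The main obstacle is justifying the structural modeling assumption: what the joint law of the entries of $\U_l$ must be for both $\R^c_l$ and $\R^r_l$ to control $\|\U_l\|_2$. For a fully general covariance $\R_l$, $\|\U_l\|_2$ is not determined by the row and column moments alone, up to constants. I would therefore first try the noncommutative Khintchine / Gaussian-series bound, which needs no Kronecker assumption and handles arbitrary jointly Gaussian entries, and fall back on the Kronecker model only if the constants become awkward.
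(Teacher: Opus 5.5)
Your main route (Steps 1--4) is correct, and it differs from the paper's argument. Writing $\vecc(\U_l)=\sigma\R_l^{1/2}\mathbf{g}$ turns any centered jointly Gaussian $\U_l$ into a Gaussian series. Tropp's tail bound $\Pro(\|\U_l\|_2\ge t)\le 2h\,e^{-t^2/(2v)}$, with $v=\sigma^2\max(\|\R^r_l\|_2,\|\R^c_l\|_2)$, then applies with no Kronecker assumption, and you get the claim with the explicit constant $c=\sqrt{2\ln(4h)}$.

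The paper splits the argument into a mean bound and a deviation bound:
\begin{itemize}
\item It imports from Bandeira et al.\ the estimate $\E\|\U_l\|_2\lesssim(1+\sqrt{\ln h})\|\E(\U_l^\top\U_l)\|_2^{1/2}+\|\E(\U_l\U_l^\top)\|_2^{1/2}$.
\item It adds Gaussian concentration $\Pro(|\|\U_l\|_2-\E\|\U_l\|_2|\ge t)\le 2e^{-t^2/2\sigma_*(\U_l)^2}$, with weak variance $\sigma_*(\U_l)\le\|\E(\U_l^\top\U_l)\|_2^{1/2}$.
\item It ends with $c=\sqrt{2\ln(4n)}+c_1(1+\sqrt{\ln h})$.
\end{itemize}
The split charges the confidence and union-bound cost only against $\sigma_*$, which is at most the smaller of the two variance parameters. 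It would also improve automatically if a sharper expectation bound were plugged in. Your one-shot bound pays the logarithm on the full maximum, but it rests on a textbook inequality and leaves no unspecified constant $c_1$. Both give $c=O(\sqrt{\ln(nh)})$.

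One adjustment is needed. The paper's proof takes a union bound over the $n$ layers, so its event holds for all $l$ simultaneously with probability at least $\frac12$. That simultaneous version is what the proof of \Cref{thm:advbound} uses when it bounds $\sum_l\|\U_l\|_2$, and it is where the $n$-dependence of $c$ comes from. Your Step 3 gives probability $\frac12$ for a single layer only. To match, set each layer's failure probability to $\frac{1}{2n}$, which gives $t=\sqrt{2\ln(4nh)}\,\sigma\max(\|\R^r_l\|_2^{1/2},\|\R^c_l\|_2^{1/2})$.

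A minor point on your Kronecker fallback: with entries of variance $\sigma^2$, the diagonals of $\R^c_l$ and $\R^r_l$ equal $h$, not $1$; only $\R_l$ is unit-diagonal. The traces are still fixed, so absorbing them into $c$ still works.
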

\begin{proof}
    See Appendix (Proof for Propositions). 
\end{proof}


Note that the probability of $\frac{1}{2}$ serves as a default value, as adopted in \citet{neyshabur2017pac}. 
Drawing from the foundation laid out in \Cref{lem:3.3}, which establishes the bound for the perturbed model, and \Cref{lem:3.4}, which details the bound for random perturbation, we are able to formulate the following PAC-Bayesian robust generalization bound, details are given in the following. 
}

\vspace{3mm}

\noindent \emph{Proof for \Cref{thm:advbound}.} Consider a neural network with weight matrices denoted as $\W$. 
To regularize the network, each weight matrix $\W_l$ can be scaled by its spectral norm $\|\W_l\|_2$.
Define $\beta$ as the geometric mean of these spectral norms across all weight matrices, specifically, $\beta = \left(\prod_{l=1}^n \|\W_l\|_2\right)^{1/n}$. 
We then introduce a modified version of the weight matrices, represented as $\widetilde{\W}_l = \frac{\beta}{\|\W_l\|_2} \W_l$, achieved by scaling the original weights $\W_l$ by the factor $\frac{\beta}{\|\W_l\|_2}$. 
Owing to the homogeneity property of the ReLU function, the network behavior using these modified weights, expressed as $f_{\widetilde{\w}}(\cdot)$, remains consistent with that of the original network $f_{\w}(\cdot)$.

Furthermore, this analysis reveals that the product of the spectral norms of the original weight matrices, $\left(\prod_{l=1}^n \|\W_l\|_2\right)$, is equivalent to the product of the spectral norms of the modified weights, $\left(\prod_{l=1}^n \|\widetilde{\W}_l\|_2\right)$. 
In addition, there exists an equivalence in the ratio of the Frobenius norm to the spectral norm between the original and modified weights, i.e., $\frac{\|\W_l\|_F}{\|\W_l\|_2} = \frac{\|\widetilde{\W}_l\|_F}{\|\widetilde{\W}_l\|_2}$. 
This observation implies that the excess error remains invariant under the application of this weight normalization strategy. 
Therefore, it is sufficient to establish the theorem for the case of the normalized weights $\widetilde{\w}$ alone. 
Following previous work, we assume that the spectral norm of each weight matrix is uniformly equal to $\beta$, that is, $\|\W_l\|_2 = \beta$ for any given layer $l$.

In this study, we set the prior distribution $P$ as a Gaussian distribution, characterized by a zero mean and a diagonal covariance matrix $\sigma^2 \mathbf{I}$. 
The parameter $\sigma$ will be subsequently determined by $\beta$. 
Considering that the prior must remain independent of the learned predictor $f_\w(\cdot)$ or its norm, we opt to set $\sigma$ based on an estimated value, $\tilde{\beta}$. 
We methodically compute the PAC-Bayesian bound for each value of $\tilde{\beta}$ selected from a pre-determined grid. 
This computation aims to establish a generalization guarantee for all $\w$ satisfying the condition $|\beta - \tilde{\beta}| \le \frac{1}{n} \beta$, thereby ensuring the inclusion of each pertinent $\beta$ by some $\tilde{\beta}$ in the grid. 
Subsequently, a union bound is applied across all values of $\tilde{\beta}$. 
For the current scope, we fix $\tilde{\beta}$ and consider $\w$ fulfilling $|\beta - \tilde{\beta}| \leq \frac{1}{n} \beta$, leading to the bounds $\frac{1}{e} \beta^{n-1} \leq \tilde{\beta}^{n-1} \leq e \beta^{n-1}$.

Recalling \Cref{lem:3.4}, with probability at least $\frac{1}{2}$, we have 
\begin{small}
\begin{equation}
\|\U_l\|_2 \le c\Big ( \|\R^c_l\|_2^{\frac{1}{2}}+\|\R^r_l\|_2^{\frac{1}{2}} \Big)\sigma,
\end{equation}
\end{small}where $c>0$ is a universal constant.
Plugging the bounds into \Cref{lem:A1} and \Cref{lem:A2}, we have that 
\begin{small}
\begin{equation}
\label{eq:a1}
\begin{aligned}
&\max_{i, j \in [n^y], \x \in \X} \left|RM(f_{\w+\ul}(\x), i, j) - RM(f_{\w}(\x), i, j)\right| \\
&\le 2e(B+\epsilon) \beta^{n-1} \sum_{l=1}^{n} \|\U_l\|_2\\
&\le 2e^2c\sigma(B+\epsilon) \tilde \beta^{n-1} \sum_{l=1}^{n} (\|\R^c_l\|_2^{\frac{1}{2}}+\|\R^r_l\|_2^{\frac{1}{2}})\\
&\le \frac{\gamma}{2}
\end{aligned}
\end{equation}
\end{small}To make \eqref{eq:a1} hold, given $\tilde{\beta}^{n-1} \leq e \beta^{n-1}$, we can choose the largest $\sigma$ as
\begin{small}
\begin{equation}
\sigma = \frac{\gamma}{4e^3c(B+\epsilon) \prod_{l=1}^{n} \|\W_l\|_2 \sum_{l=1}^{n} \frac{\|\R^c_l\|_2^{\frac{1}{2}}+\|\R^r_l\|_2^{\frac{1}{2}}}{\|\W_l\|_2}}.
\end{equation}
\end{small}Thus, we have
\begin{small}
\begin{equation}\nonumber
\label{eq:a3}
\begin{aligned}
\KL(Q_{\w+\ul}\|P)&=\sum_{l=1}^n \Big( \frac{\|\W_l\|_F^2}{2\sigma^2}-\ln \det \R_l \Big)\\
& =  \frac{16e^6c^2(B+\epsilon)^2 \left(\mathop{\sum}\limits_{l=1}^n\frac{\|\W_l\|_F^2}{\|\W_l\|_2^2}\right) (\mathop{\prod}\limits_{l=1}^{n} \|\W_l\|_2^2)}{\gamma^2}\\
&\quad\cdot (\mathop{\sum}\limits_{l=1}^{n} (\|\R^c_l\|_2^{\frac{1}{2}}+\|\R^r_l\|_2^{\frac{1}{2}}))^2-\sum_{l=1}^n\ln \det \R_l. 
\end{aligned}
\end{equation}
\end{small}

Then, we can present a union bound over different choices of $\tilde \beta$.
We only need to develop the bound for $\left(\frac{\gamma}{2 B}\right)^{\frac{1}{n}} \leq \beta \leq\left(\frac{\gamma \sqrt{m}}{2 B}\right)^{\frac{1}{n}}$ which can be covered with a size of $nm^{\frac{1}{2n}}$, as discussed in \citet{neyshabur2017pac}.
Integrating the above results with \Cref{lem:A2}, with probability at least $1-\delta$, for any $\tilde \beta$ and for all $\w$ such that $|\beta-\tilde{\beta}| \leq \frac{1}{n} \beta$, we have 
\begin{small}
\begin{equation}
\begin{aligned}
&\Lc^{adv}_{0}(f_\w) \le  \widehat\Lc_{\gamma}^{adv}(f_\w)\\
&+\mathcal{O} \left( \sqrt{\frac{-\sum_l \ln \det \R_l+\ln\frac{nm}{\delta}+(B+\epsilon)^2c^2\Phi(f_\w)}{\gamma^2m}} \right),
\end{aligned}
\end{equation}
\end{small}where 
\begin{small}
\begin{equation}\nonumber
\begin{aligned}
    \Phi(f_\w)=\prod_{l=1}^n \|\W_l\|_2^2 \sum_{l=1}^n \frac{\|\W_l\|^2_F}{\|\W_l\|^2_2} \Big(\sum_{l=1}^n\big(\|\R^c_l\|_2^{\frac{1}{2}}+\|\R^r_l\|_2^{\frac{1}{2}}\big)\Big)^{2}.
\end{aligned}
\end{equation}
\end{small}Hence, proved. \hfill $\square$
}

{
\appendix[Proof for Theorem~\ref{thm:advbound2}]
As mentioned in \Cref{ass:true correlation matrix}, we assume $\R_l$ is a combination of $\R_{l,\x}$ and $\R_{l,\x'}$ with an unknown coefficient $q$. 
We thus seek to bound $\|\R^c_l\|_2$, $\|\R^r_l\|_2$ and $\det \R_l$ through $\R_{l,\x}$ and $\R_{l,\x'}$, and build a bound based on $\R_{l,\x}$ and $\R_{l,\x'}$.

At first, $\|\R^c_l\|_2$ and $\|\R^r_l\|_2$ can be bounded through \citet{knutson2001honeycombs}, i.e.,
\begin{small}
\begin{equation}
\begin{aligned}
\|\R^c_l\|_2&\le q\lambda_{\max}(\R^c_{l,\x})+(1-q)\lambda_{\max}(\R^c_{l,\x'})\le (\Lambda^c_{l,\max})^2,
\end{aligned}
\end{equation}
\end{small}and similarly,
\begin{equation}
\begin{aligned}
\|\R^r_l\|_2&\le q\lambda_{\max}(\R^r_{l,\x})+(1-q)\lambda_{\max}(\R^r_{l,\x'})\le (\Lambda^r_{l,\max})^2.
\end{aligned}
\end{equation}
Then, we bound $\det \R_l$ through \citet{knutson2001honeycombs,kalantari2001determinantal}. 
Given any vector $\s$, we have 
\begin{small}
\begin{equation}
\begin{aligned}
    \langle \s, \R_l\s \rangle&=\langle \s, (q\R_{l,\x}+(1-q)\R_{l,\x'})\s \rangle\\
    &\ge (q\lambda_{\min}(\R_{l,\x})+(1-q)\lambda_{\min}(\R_{l,\x'}))\|\s\|_2^2\\
    &\ge \Lambda_{l,\min}\|\s\|_2^2.
\end{aligned}
\end{equation}
\end{small}Hence, we can get $\lambda_{\min}(\R_l)\ge\Lambda_{l,\min}$. 
According to \citet{knutson2001honeycombs} and the determinant lower bound in \citet{kalantari2001determinantal}, we have $\lambda_{\max}(\R_l)\le \Lambda_{l,\max}$, and
\begin{small}
\begin{equation}
    \det \R_l \ge \Lambda_{l,\min}^{k_l} \Lambda_{l,\max}^{\hh-k_l},
\end{equation}
\end{small}where $k_l=(\hh\Lambda_{l,\max}-\hh)/(\Lambda_{l,\max}-\Lambda_{l,\min})$.

Plugging the above results into \Cref{thm:advbound}, with probability at least $1-\delta$, we have
\begin{small}
\begin{equation}\nonumber
\begin{aligned}
&\Lc^{adv}_{0}(f_\w) \! \le \! \widehat\Lc_{\gamma}^{adv}(f_\w)\\
&\quad+\mathcal{O} \left( \sqrt{\frac{(B+\epsilon)^2c^2\Phi(f_\w)-\sum_l\ln (\Lambda_{l,\min}^{k_l} \Lambda_{l,\max}^{\hh-k_l})+\ln\frac{nm}{\delta}}{\gamma^2m}} \right),
\end{aligned}
\end{equation}
\end{small}where 
\begin{small}
\begin{equation}\nonumber
\begin{aligned}
    \Phi(f_\w)=\prod_{l=1}^n \|\W_l\|_2^2 \sum_{l=1}^n \frac{\|\W_l\|^2_F}{\|\W_l\|^2_2} \Big(\sum_l\big(\Lambda^c_{l,\max}+\Lambda^r_{l,\max}\big)\Big)^{2},
\end{aligned}
\end{equation}
\end{small}and $k_l=(\hh\Lambda_{l,\max}-\hh)/(\Lambda_{l,\max}-\Lambda_{l,\min})$.

\noindent Hence, proved. \hfill $\square$
}

{
\appendix[Proof for propositions]

\noindent \emph{Proof for \Cref{lem:3.4}.}
The work of \citet{bandeira2021spectral} implies that 
\begin{small}
\begin{equation}\nonumber
\begin{aligned}
   \E\|\U_l\|_2&\lesssim (1+\sqrt{\ln h})\|\E(\U_l^\top\U_l)\|_2^{\frac{1}{2}}+\|\E(\U_l\U_l^\top)\|_2^{\frac{1}{2}} \\
   &\le c_1\Big((1+\sqrt{\ln h})\|\E(\U_l^\top\U_l)\|_2^{\frac{1}{2}}+\|\E(\U_l\U_l^\top)\|_2^{\frac{1}{2}} \Big),
\end{aligned}
\end{equation}
\end{small}\begin{small}\begin{equation}\nonumber
    \Pro\Big(\Big| \|\U_l\|_2-\E\|\U_l\|_2 \Big|\ge t\Big)\le 2e^{-t^2/2\sigma_{*}(\U_l)^2},
\end{equation}
\end{small}\begin{small}\begin{equation}\nonumber
    \sigma_{*}(\U_l)\le \|\E(\U_l^\top\U_l)\|_2^{\frac{1}{2}},
\end{equation}
\end{small}where $c_1>0$ is a universal constant.
Taking a union bond over the layers, with probability at least $\frac{1}{2}$, the spectral norm of $\U_l$ is bounded by
\begin{small}
\begin{equation}
(\sqrt{2\ln(4n)}+c_1(1+\sqrt{\ln h}))\|\E(\U_l^\top\U_l)\|_2^{\frac{1}{2}}+c_1\|\E(\U_l\U_l^\top)\|_2^{\frac{1}{2}}.
\end{equation}
\end{small}Then, let $c=\sqrt{2\ln(4n)}+c_1 (1+\sqrt{\ln h})$, with probability at least $\frac{1}{2}$, we have
\begin{small}
\begin{equation}
\begin{aligned}
\|\U_l\|_2&\le (\sqrt{2\ln(4n)}+c_1(1+\sqrt{\ln h}))\|\E(\U_l^\top\U_l)\|_2^{\frac{1}{2}}\\
&\quad\quad\quad\quad\quad\quad\quad\quad\quad\quad\quad\quad\quad\quad\;+c_1\|\E(\U_l\U_l^\top)\|_2^{\frac{1}{2}}\\
&\le c\|\E(\U_l^\top\U_l)\|_2^{\frac{1}{2}}+c\|\E(\U_l\U_l^\top)\|_2^{\frac{1}{2}} \\
& = c\Big ( \|\R^c_l\|_2^{\frac{1}{2}}+\|\R^r_l\|_2^{\frac{1}{2}} \Big)\sigma.
\end{aligned}
\end{equation}
\end{small}Hence, proved. \hfill $\square$

$\quad$

\noindent \emph{Proof for \Cref{lem:4.1}.}
Given $\R_{l}\in \mathbb{R}^{h^2\times h^2}$, $\R^c_{l}\in \mathbb{R}^{h\times h}$, $\R^r_{l}\in \mathbb{R}^{h\times h}$, let $r_{\x}\ge 0$ and $ r_{\x'}\ge 0$, we have
\begin{small}
\begin{equation}
\begin{aligned}
\Lambda^c_{l,\max}&=\max\big(\|\R^c_{l,\x}\|_2^{\frac{1}{2}},\|\R^c_{l,\x'}\|_2^{\frac{1}{2}}\big)\\
&=\sqrt{h\big(1+(h-1)\max(r_\x,r_{\x'})\big)}
\end{aligned}    
\end{equation}
\end{small}and 
\begin{small}
\begin{equation}
\begin{aligned}
\Lambda^r_{l,\max}&=\max\big(\|\R^r_{l,\x}\|_2^{\frac{1}{2}},\|\R^r_{l,\x'}\|_2^{\frac{1}{2}}\big)\\
&=\sqrt{h\big(1+(h-1)\max(r_\x,r_{\x'})\big)}.
\end{aligned}    
\end{equation}
\end{small}Thus, decreasing $\|\R_{l,\x}\|_F^2$ and $\|\R_{l,\x'}\|_F^2$ leads to a decline in  $\Lambda^c_{l,\max}$ and $\Lambda^r_{l,\max}$.

\quad

\noindent Let $r_{\x}\le 0$ and $ r_{\x'}\le 0$, we have
\begin{small}
\begin{equation}
\begin{aligned}
\Lambda^c_{l,\max}&=\max\big(\|\R^c_{l,\x}\|_2^{\frac{1}{2}},\|\R^c_{l,\x'}\|_2^{\frac{1}{2}}\big)=\sqrt{h\big(1-\min(r_\x,r_{\x'})\big)}
\end{aligned}    
\end{equation}
\end{small}and 
\begin{small}
\begin{equation}
\begin{aligned}
\Lambda^r_{l,\max}&=\max\big(\|\R^r_{l,\x}\|_2^{\frac{1}{2}},\|\R^r_{l,\x'}\|_2^{\frac{1}{2}}\big)=\sqrt{h\big(1-\min(r_\x,r_{\x'})\big)}.
\end{aligned}    
\end{equation}
\end{small}Thus, decreasing $\|\R_{l,\x}\|_F^2$ and $\|\R_{l,\x'}\|_F^2$ leads to a decline in  $\Lambda^r_{l,\max}$ and $\Lambda^c_{l,\max}$.

\noindent Hence, proved. \hfill $\square$

$\quad$

\noindent \emph{Proof for \Cref{lem:4.2}.}
Given $\R_{l}\in \mathbb{R}^{h^2\times h^2}$, let $r_{\x}\ge r_{\x'}\ge 0$, we have
\begin{small}
\begin{equation}
\begin{aligned}
    c(r)&=\Lambda_{l,\min}^{k_l} \Lambda_{l,\max}^{\hh-k_l}=(1-r_\x)^{\hh-1}(1+(\hh-1)r_\x)\\
\end{aligned}
\end{equation}
\end{small}and 
\begin{small}
\begin{equation}
\begin{aligned}
    \frac{\partial c(r)}{\partial r_\x} = -\hh(\hh-1)r_\x (1-r_\x)^{\hh-2}\le 0,
\end{aligned}
\end{equation}
\end{small}it is obvious to get that $c(r)$ is negatively correlated with $r_\x$. 
Similarly, if $r_{\x'}\ge r_{\x}\ge 0$, we can get $c(r)$ is negatively correlated with $r_{\x'}$. 
Thus, decreasing $\|\R_{l,\x}\|_F^2$ and $\|\R_{l,\x'}\|_F^2$ leads to an increase in $\Lambda_{l,\min}^{k_l} \Lambda_{l,\max}^{\hh-k_l}$.  

\quad

\noindent Let $r_{\x}\le r_{\x'}\le 0$, we have
\begin{small}
\begin{equation}
\begin{aligned}
    c(r)&=\Lambda_{l,\min}^{k_l} \Lambda_{l,\max}^{\hh-k_l}=(1+(\hh-1)r_\x)(1-r_\x)^{\hh-1}\\
\end{aligned}
\end{equation}
\end{small}and
\begin{small}
\begin{equation}
\begin{aligned}
    \frac{\partial c(r)}{\partial r_\x} = -\hh(\hh-1)r_\x (1-r_\x)^{\hh-2}\ge 0,
\end{aligned}
\end{equation}
\end{small}it is clear to get that $c(r)$ is positively correlated with $r_\x$. Similarly, if $r_{\x'}\le r_{\x}\le 0$, we can get $c(r)$ is positively correlated with $r_{\x'}$. 
Thus, decreasing $\|\R_{l,\x}\|_F^2$ and $\|\R_{l,\x'}\|_F^2$ leads to an increase in $\Lambda_{l,\min}^{k_l} \Lambda_{l,\max}^{\hh-k_l}$.

\noindent Hence, proved. \hfill $\square$
}


\bibliographystyle{IEEEtranN}
\bibliography{sn-bibliography}

\vfill

\end{document}